\newtheorem{theorem}{Theorem}
\newtheorem*{theorem*}{Theorem}
\newtheorem{lemma}{Lemma}
\newtheorem*{remark}{Remark}
\definecolor{mygray}{gray}{.9}
\definecolor{mypink}{rgb}{.99,.91,.95}
\definecolor{mycyan}{cmyk}{.3,0,0,0}
\title{You Only Propagate Once: Accelerating Adversarial Training via Maximal Principle}
\author{
Dinghuai Zhang$^*$, Tianyuan Zhang$^*$\\
  Peking University\\
  \texttt{\{zhangdinghuai, 1600012888\}@pku.edu.cn}
  \And 
  Yiping Lu\thanks{Equal Contribution}\\
  Stanford University\\
  \texttt{yplu@stanford.edu}
  \\
  \And
  Zhanxing Zhu$^\dagger$\\
  School of Mathematical Sciences, Peking University\\
    Center for Data Science, Peking University\\
Beijing Institute of Big Data Research\\
\texttt{zhanxing.zhu@pku.edu.cn} 
\\
\AND
Bin Dong\thanks{Corresponding Authors}\\
Beijing International Center for Mathematical Research, Peking University\\
Center for Data Science, Peking University\\
Beijing Institute of Big Data Research\\
\texttt{dongbin@math.pku.edu.cn}
}
\begin{document}

\maketitle

\begin{abstract}
  Deep learning achieves state-of-the-art results in many tasks in computer vision and natural language processing. However, recent works have shown that deep networks can be vulnerable to adversarial perturbations, which raised a serious robustness issue of deep networks. Adversarial training, typically formulated as a robust optimization problem, is an effective way of improving the robustness of deep networks. A major drawback of existing adversarial training algorithms is the computational overhead of the generation of adversarial examples, typically far greater than that of the network training. This leads to the unbearable overall computational cost of adversarial training. In this paper, we show that adversarial training can be cast as a discrete time differential game. Through analyzing the Pontryagin’s Maximum Principle (PMP) of the problem, we observe that the adversary update is only coupled with the parameters of the first layer of the network. This inspires us to restrict most of the forward and back propagation within the first layer of the network during adversary updates. This effectively reduces the total number of full forward and backward propagation to only one for each group of adversary updates. Therefore, we refer to this algorithm YOPO (\textbf{Y}ou \textbf{O}nly \textbf{P}ropagate  \textbf{O}nce). Numerical experiments demonstrate that YOPO can achieve comparable defense accuracy with \textbf{approximately 1/5 $\sim$ 1/4 GPU time} of the projected gradient descent (PGD) algorithm~\cite{kurakin2016adversarial}.\footnote{Our codes are available at \href{https://github.com/a1600012888/YOPO-You-Only-Propagate-Once}{https://github.com/a1600012888/YOPO-You-Only-Propagate-Once}}
\end{abstract}

\section{Introduction}

Deep neural networks achieve state-of-the-art performance on many tasks~\cite{lecun2015deep,goodfellow2016deep}. However, recent works show that deep networks are often sensitive to adversarial perturbations~\cite{szegedy2013intriguing,moosavi2016deepfool,zugner2018adversarial}, i.e., changing the input in a way imperceptible to humans while causing the neural network to output an incorrect prediction. This poses significant concerns when applying deep neural networks to safety-critical problems such as autonomous driving and medical domains. To effectively defend the adversarial attacks,  \cite{madry2018towards} proposed adversarial training, which can be formulated as a robust optimization~\cite{wald1939contributions}:
\begin{align}
\label{equ:ro}
    \min_\theta \mathbb{E}_{(x,y)\sim \mathcal{D}} \max_{\|\eta\| \leq \epsilon} \ell(\theta;x+\eta,y),
\end{align}
where $\theta$ is the network parameter, $\eta$ is the adversarial perturbation, and $(x,y)$ is a pair of data and label drawn from a certain distribution $\mathcal{D}$. The magnitude of the adversarial perturbation $\eta$ is restricted by $\epsilon>0$. 
For a given pair $(x,y)$, we refer to the value of the inner maximization of \eqref{equ:ro}, i.e. $\max_{\|\eta\| \leq \epsilon} \ell(\theta;x+\eta,y)$,  as the adversarial loss which depends on $(x,y)$.

A major issue of the current adversarial training methods is their significantly high computational cost. In adversarial training, we need to solve the inner loop, which is to obtain the "optimal" adversarial attack to the input in every iteration. Such "optimal" adversary is usually obtained using multi-step gradient decent, and thus the total time for learning a model using standard adversarial training method is much more than that using the standard training. Considering applying 40 inner iterations of projected gradient descent (PGD~\cite{kurakin2016adversarial}) to obtain the adversarial examples, the computation cost of solving the problem \eqref{equ:ro} is about 40 times that of a regular training.

\begin{wrapfigure}{r}{3in}
	\includegraphics[width=3in]{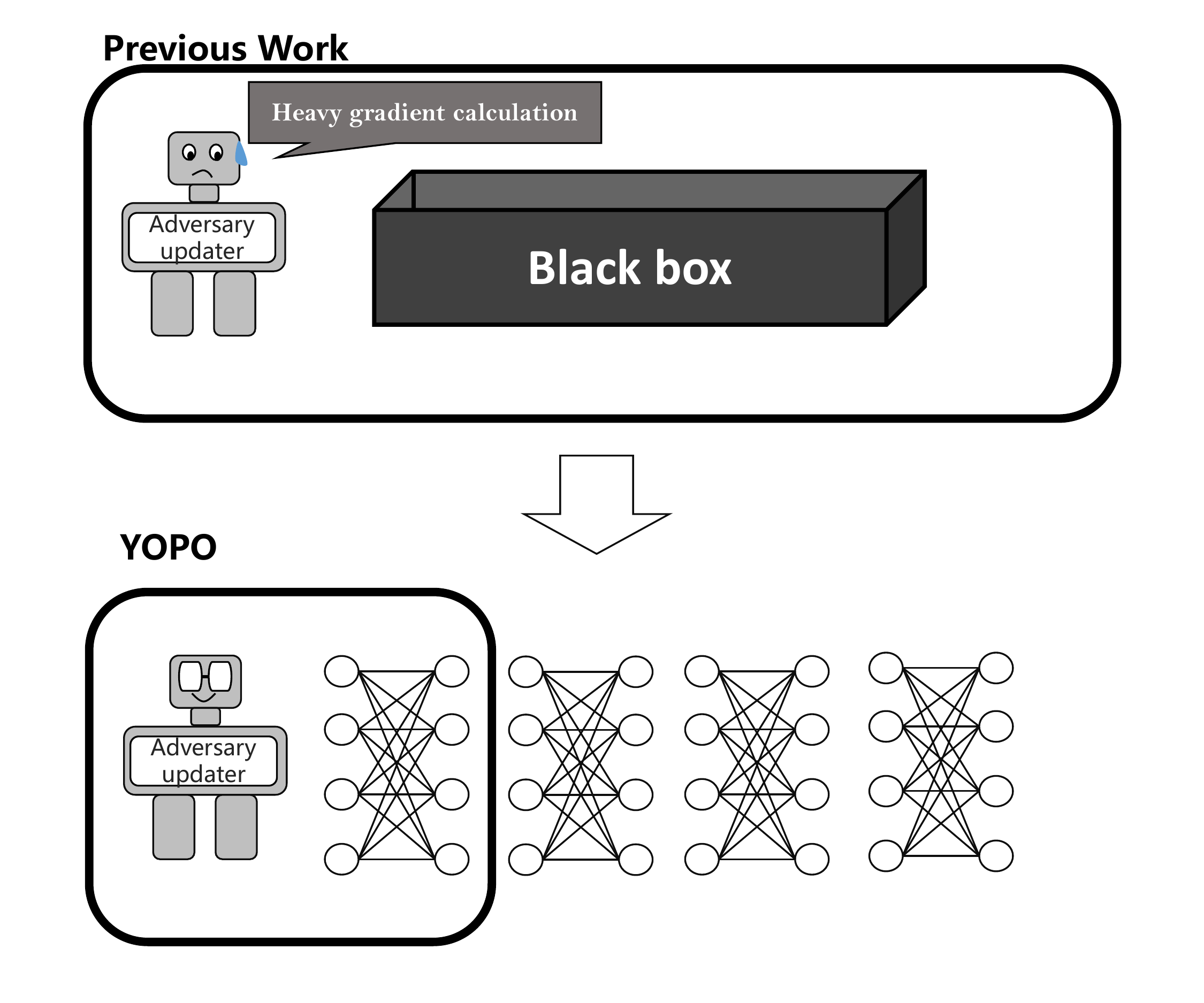}
	\caption{Our proposed YOPO expolits the structure of neural network. To alleviate the heavy computation cost, YOPO focuses the calculation of the adversary at the first layer.}
\end{wrapfigure}

 The main objective of this paper is to reduce the computational burden of adversarial training by limiting the number of forward and backward propagation without hurting the performance of the trained network. In this paper, we exploit the structures that the min-max objectives is encountered with deep neural networks. To achieve this, we formulate the adversarial training problem\eqref{equ:ro} as a differential game. 
 Afterwards we can derive the Pontryagin’s Maximum Principle (PMP) of the problem.


From the PMP, we discover a key fact that the adversarial perturbation is only coupled with the weights of the first layer. This motivates us to propose a novel adversarial training strategy by decoupling the adversary update from the training of the network parameters. This effectively reduces the total number of full forward and backward propagation to only one for each group of adversary updates, significantly lowering the overall computation cost without hampering performance of the trained network. We name this new adversarial training algorithm as \textbf{YOPO} (\textbf{Y}ou \textbf{O}nly \textbf{P}ropagate  \textbf{O}nce). Our numerical experiments show that YOPO achieves approximately 4 $\sim$5 times speedup over the original PGD adversarial training with comparable accuracy on MNIST/CIFAR10. Furthermore, we apply our algorithm to a recent proposed min max optimization objective "TRADES"\cite{zhang2019theoretically} and achieve better clean and robust accuracy within half of the time TRADES need.

\subsection{Related Works}
\paragraph{Adversarial Defense.} To improve the robustness of neural networks to adversarial examples, many defense strategies and models have been proposed, such as adversarial training~\cite{madry2018towards}, orthogonal regularization~\cite{cisse2017parseval,lin2018defensive},  Bayesian method~\cite{ye2018bayesian}, TRADES~\cite{zhang2019theoretically}, rejecting adversarial examples~\cite{xu2017feature}, Jacobian regularization~\cite{jakubovitz2018improving,qian2018l2},  generative model based defense~\cite{ilyas2017robust, sun2019enhancing}, pixel defense~\cite{song2017pixeldefend,luo2019random}, ordinary differential equation (ODE) viewpoint~\cite{zhang2019towards}, ensemble via an intriguing stochastic differential equation  perspective~\cite{wang2018enresnet}, and feature denoising~\cite{xie2018feature,svoboda2018peernets}, etc. Among all these approaches, adversarial training and its variants tend to be most effective since it largely avoids the the obfuscated gradient problem~\cite{athalye2018obfuscated}. Therefore, in this paper, we choose adversarial training to achieve model robustness.

\paragraph{Neural ODEs.} Recent works have built up the relationship between ordinary differential equations and neural networks~\cite{weinan2017proposal,lu2017beyond,haber2017stable,chen2018neural,zhang2018dynamically,thorpe2018deep,sonoda2019transport}. Observing that each residual block of ResNet can be written as $u_{n+1}=u_n+\Delta t f(u_n)$, one step of forward Euler method approximating the ODE $u_t=f(u)$. Thus \cite{li2017maximum,weinan2019mean} proposed an optimal control framework for deep learning and \cite{chen2018neural,li2017maximum,pmlr-v80-li18b} utilize the adjoint equation and the maximal principle to train a neural network.


\paragraph{Decouple Training.} Training neural networks requires forward and backward propagation in a sequential manner. Different ways have been proposed to decouple the sequential process by parallelization. This includes ADMM~\cite{taylor2016training}, synthetic gradients~\cite{jaderberg2017decoupled}, delayed gradient~\cite{huo2018decoupled}, lifted machines~\cite{askari2018lifted,li2018lifted,gu2018fenchel}. Our work can also be understood as a decoupling method based on a  splitting technique. However, we do not attempt to decouple the gradient w.r.t. network parameters but the adversary update instead.

\subsection{Contribution}

\begin{itemize}
\item To the best of our knowledge, it is the first attempt to design \emph{NN–specific} algorithm for adversarial defense. To achieve this, we recast the adversarial training problem as a discrete time differential game. From optimal control theory, we derive the an optimality  condition, \emph{i.e.} the Pontryagin’s Maximum Principle, for the differential game. 

\item Through PMP, we observe that the adversarial perturbation is only coupled with the first
layer of neural networks. The PMP motivates a new adversarial training algorithm, YOPO. We split the adversary computation and weight updating and the adversary computation is focused on  the first layer. Relations between YOPO and original PGD are discussed.

\item We finally achieve about  \textbf{4}$\sim$\textbf{ 5}  \textbf{times speed up} than the original PGD training with comparable
results on MNIST/CIFAR10. Combining YOPO with TRADES\cite{zhang2019theoretically}, we achieve both higher clean and robust accuracy within less than half of the time TRADES need.

\end{itemize}

\subsection{Organization}
This paper is organized as follows. In Section~\ref{sec:yopo}, we formulate the robust optimization for neural network adversarial training as a differential game and propose the gradient based YOPO. In Section~\ref{sec:optimalcontrol}, we derive the PMP of the differential game, study the relationship between the PMP and the back-propagation based gradient descent methods, and propose a general version of YOPO. Finally, all the experimental details and results are given in Section~\ref{sec:exp}. 

\section{Differential Game Formulation and Gradient Based YOPO}
\label{sec:yopo}

\subsection{The Optimal Control Perspective and Differential Game}

Inspired by the link between deep learning and optimal control~\cite{pmlr-v80-li18b}, we formulate the robust optimization \eqref{equ:ro} as a  differential game~\cite{evans2005introduction}. A two-player, zero-sum differential game is a game where each player controls a dynamics, and one tries to maximize, the other to minimize, a payoff functional. In the context of adversarial training, one player is the neural network, which controls the weights of the network to fit the label, while the other is the adversary that is dedicated to producing a false prediction by modifying the input. 

The robust optimization problem \eqref{equ:ro} can be written as a differential game as follows,
\begin{equation}
\begin{aligned}
    \min_{\theta} \max_{\|\eta_i\|_\infty \le \epsilon}&  J(\theta,\eta):=\frac{1}{N}\sum_{i=1}^N\ell_i(x_{i,T})+\frac{1}{N}\sum_{i=1}^N\sum_{t=0}^{T-1}R_t(x_{i,t};\theta_t) \\ 
    \text{subject to }&x_{i,1}=f_0(x_{i,0}+\eta_i,\theta_0), i=1,2,\cdots,N \\
    &x_{i,t+1}=f_t(x_{i,t},\theta_t), t = 1,2,\cdots,T-1
\end{aligned}
\label{eq:problem}
\end{equation}
Here, the dynamics $\{f_t(x_t,\theta_t), t=0,1,\ldots,T-1\}$ represent a deep neural network, $T$ denote the number of layers, $\theta_t\in\Theta_t$ denotes the parameters in layer $t$ (denote $\theta = \{\theta_t\}_t\in\Theta$), the function $f_t:\mathbb{R}^{d_t}\times \Theta_t\rightarrow \mathbb{R}^{d_{t+1}}$ is a nonlinear transformation for one layer of neural network where $d_t$ is the dimension of the $t$ th feature map and $\{x_{i,0},i=1,\ldots,N\}$ is the training dataset. The variable $\eta=(\eta_1,\cdots,\eta_N)$ is the adversarial perturbation and we constrain it in an $\infty$-ball. Function $\ell_i$ is a data fitting loss function and $R_t$ is the regularization weights $\theta_t$ such as the $L_2$-norm. By casting the problem of adversarial training as a differential game \eqref{eq:problem}, we regard $\theta$ and $\eta$ as two competing players, each trying to minimize/maximize the loss function $J(\theta,\eta)$ respectively.

\subsection{Gradient Based YOPO}
\label{sec:grad_yopo}

The Pontryagin’s Maximum Principle (PMP) is a fundamental tool in optimal control that characterizes optimal solutions of the corresponding control problem \cite{evans2005introduction}. PMP is a rather general framework that inspires a variety of optimization algorithms.
In this paper, we will derive the PMP of the differential game \eqref{eq:problem}, which motivates the proposed YOPO in its most general form. However, to better illustrate the essential idea of YOPO and to better address its relations with existing methods such as PGD, we present a special case of YOPO in this section based on gradient descent/ascent. We postpone the introduction of PMP and the general version of YOPO to Section \ref{sec:optimalcontrol}.

Let us first rewrite the original robust optimization problem~\eqref{equ:ro} (in a mini-batch form) as
\begin{align*}
    \min_\theta \max_{\| \eta_i \| \leq \epsilon} \sum_{i=1}^B \ell( g_{\tilde\theta}(f_0(x_i+\eta_i,\theta_0)),y_i),
\end{align*}
where $f_0$ denotes the first layer, $g_{\tilde\theta}=f_{T-1}^{\theta_{T-1}}\circ f_{T-2}^{\theta_{T-2}}\circ\cdots f_1^{\theta_1}$ denotes the network without the first layer and $B$ is the batch size. Here $\tilde\theta$ is defined as $\left\{\theta_1,\cdots,\theta_{T-1}\right\}$. For simplicity we omit the regularization term $R_t$. 

The simplest way to solve the problem is to perform gradient ascent on the input data and gradient descent on the weights of the neural network as shown below. Such alternating optimization algorithm is essentially the  popular PGD adversarial training~\cite{madry2018towards}. We summarize the PGD-$r$ (for each update on $\theta$) as follows, i.e. performing $r$ iterations of gradient ascent for inner maximization.
\begin{tcolorbox}
\begin{itemize}
    \item  For  $s=0,1,\ldots,r-1$, perform
    \begin{equation*}
    \eta_{i}^{s+1}= \eta_i^{s} + \alpha_1 \nabla_{\eta_i}\ell( g_{\tilde\theta}(f_0(x_i+\eta_i^s,\theta_0)),y_i),\ 
    i = 1,\cdots,B,
    \end{equation*}
    where by the chain rule,
    \begin{align*}
    \nabla_{\eta_i}\ell( g_{\tilde\theta}(f_0(x_i+\eta_i^s,\theta_0)),y_i)= & \nabla_{g_{\tilde\theta}} \left(\ell(g_{\tilde\theta}(f_0(x_i+\eta_i^s,\theta_0)),y_i)\right)\cdot\\
    & \nabla_{f_0} \left(g_{\tilde\theta}(f_0(x_i+\eta_i^s,\theta_0)) \right) \cdot \nabla_{\eta_i} f_0(x_i+\eta_i^s,\theta_0).
    \end{align*}
    \item 
    Perform the SGD weight update (momentum SGD can also be used here)
    \begin{align}
    \theta \leftarrow \theta - \alpha_2 \nabla_\theta\left(\sum_{i=1}^B \ell( g_{\tilde\theta}(f_0(x_i+\eta_i^m,\theta_0)),y_i)\right) \nonumber
\end{align}
    
\end{itemize}

\end{tcolorbox}
Note that this method conducts $r$ sweeps of forward and backward propagation for each update of $\theta$. This is the main reason why adversarial training using PGD-type algorithms can be very slow. 

To reduce the total number of forward and backward propagation, we introduce a slack variable 
$$p=\nabla_{g_{\tilde\theta}} \left(\ell(g_{\tilde\theta}(f_0(x_i+\eta_i,\theta_0)),y_i)\right)\cdot \nabla_{f_0} \left(g_{\tilde\theta}(f_0(x_i+\eta_i,\theta_0)) \right)$$ and freeze it as a constant within the inner loop of the adversary update. The modified algorithm is given below and we shall refer to it as YOPO-$m$-$n$.

\begin{tcolorbox}
\begin{itemize}
    \item Initialize $\{\eta_i^{1,0}\}$ for each input $x_i$. For $j=1,2,\cdots,m$
    
    \begin{itemize}
        \item Calculate the slack variable $p$
    $$p = \nabla_{g_{\tilde\theta}} \left(\ell(g_{\tilde\theta}(f_0(x_i+\eta_i^{j,0},\theta_0)),y_i)\right)\cdot \nabla_{f_0} \left(g_{\tilde\theta}(f_0(x_i+\eta_i^{j,0},\theta_0)) \right),$$
    \item Update the adversary for $s=0,1,\ldots,n-1$ for fixed $p$ 
    \begin{align}
    \eta_{i}^{j,s+1} = \eta_i^{j,s} + \alpha_1 p \cdot \nabla_{\eta_i} f_0(x_i+\eta_i^{j,s}, \theta_0), i = 1,\cdots,B\nonumber
    \end{align}
    \item Let $\eta_i^{j+1,0}=\eta_i^{j,n}$.
   \end{itemize}
    \item Calculate the weight update
    \begin{align}
    U = \sum_{j=1}^m \nabla_\theta\left(\sum_{i=1}^B \ell( g_{\tilde\theta}(f_0(x_i+\eta_i^{j,n},\theta_0)),y_i)\right) \nonumber
    \end{align}

   and update the weight $\theta \leftarrow \theta-\alpha_2 U$. (Momentum SGD can also be used here.)
\end{itemize}
\end{tcolorbox}

Intuitively, YOPO freezes the values of the derivatives of the network at level $1,2\ldots, T-1$ during the $s$-loop of the adversary updates. Figure~\ref{fig:pipeline} shows the conceptual comprison between YOPO and PGD.  YOPO-$m$-$n$ accesses the data $m\times n$ times while only requires $m$ full forward and backward propagation. PGD-$r$, on the other hand, propagates the data $r$ times for $r$ full forward and backward propagation. As one can see that, YOPO-$m$-$n$ has the flexibility of increasing $n$ and reducing $m$ to achieve approximately the same level of attack but with much less computation cost. For example, suppose one applies PGD-10 (i.e. 10 steps of gradient ascent for solving the inner maximization) to calculate the adversary. An alternative approach is using YOPO-$5$-$2$ which also accesses the data 10 times but the total number of full forward propagation is only 5. Empirically, YOPO-m-n achieves comparable results only requiring setting $m \times n$ a litter larger than $r$.

Another benefit of YOPO is that we take full advantage of every forward and backward propagation to update the weights, i.e. the intermediate perturbation $\eta^j_i, j=1,\cdots,m-1$ are not wasted like PGD-$r$. This allows us to perform multiple updates per iteration, which potentially drives YOPO to converge faster in terms of the number of epochs. Combining the two factors together, YOPO significantly could accelerate the standard PGD adversarial training. 

We would like to point out a concurrent paper~\cite{shafahi2019adversarial} that is related to YOPO. Their proposed method, called "Free-$m$", also can significantly speed up adversarial training. In fact, Free-$m$ is essentially YOPO-$m$-1, except that YOPO-$m$-$1$ delays the weight update after the whole mini-batch is processed in order for a proper usage of momentum \footnote{Momentum should be accumulated between mini-batches other than different adversarial examples from one mini-batch, otherwise overfitting will become a serious problem.}.

\begin{figure}
    \centering
    \includegraphics[width=5in]{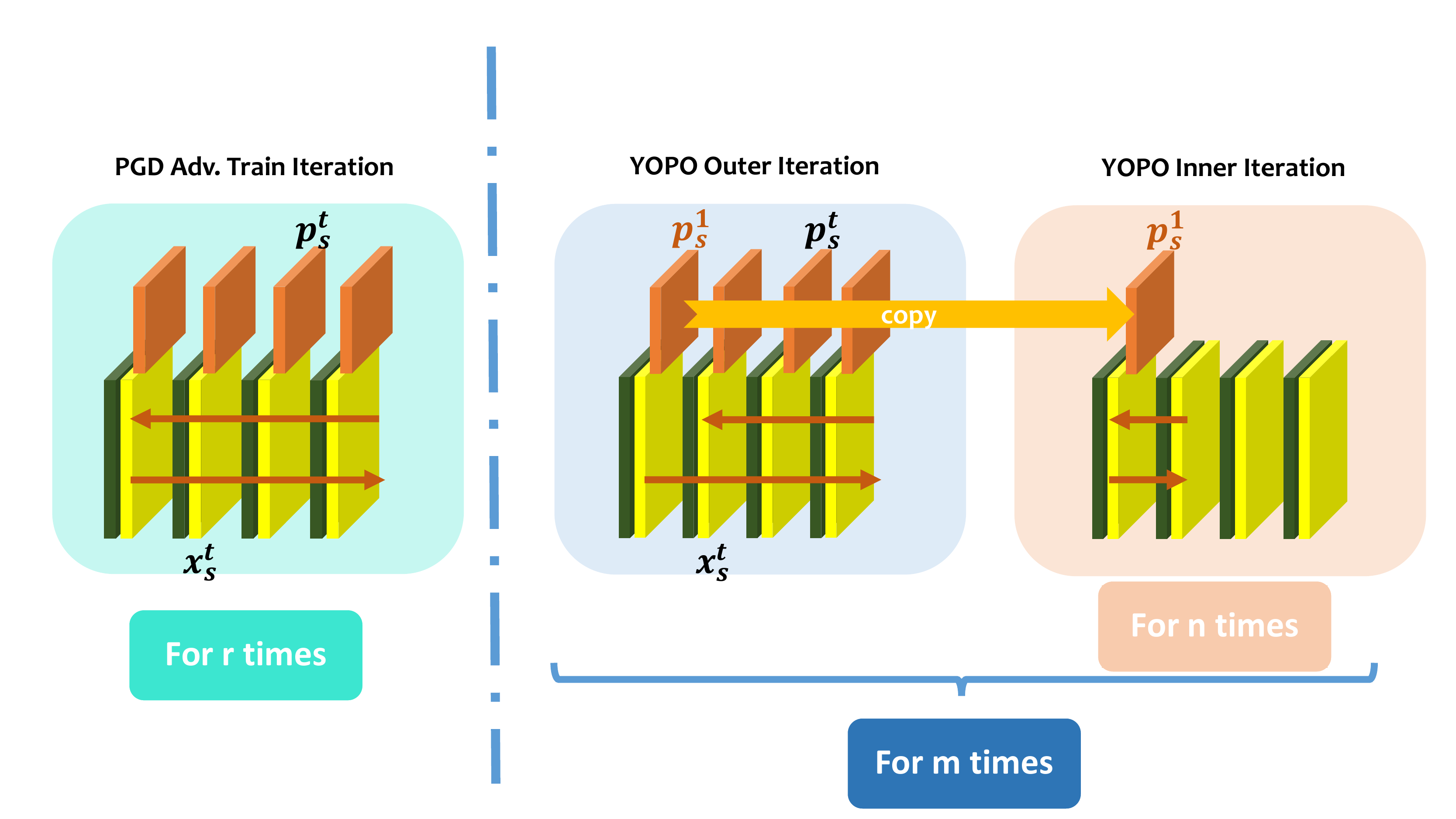}
    \vspace{-2mm}
    \caption{Pipeline of YOPO-$m$-$n$ described in Algorithm~\ref{alg:A}. The yellow and olive blocks represent feature maps while the orange blocks represent the gradients of the loss w.r.t. feature maps of each layer.}
    \label{fig:pipeline}
\end{figure}

\section{The Pontryagin’s Maximum Principle for Adversarial Training}

In this section, we present the PMP of the discrete time differential game \eqref{eq:problem}. From the PMP, we can observe that the adversary update and its associated back-propagation process can be decoupled. Furthermore, back-propagation based gradient descent can be understood as an iterative algorithm solving the PMP and with that the version of YOPO presented in the previous section can be viewed as an algorithm solving the PMP. However, the PMP facilitates a much wider class of algorithms than gradient descent algorithms \cite{li2017maximum}. Therefore, we will present a general version of YOPO based on the PMP for the discrete differential game.


\label{sec:optimalcontrol}

\subsection{PMP}

Pontryagin type of maximal principle~\cite{pontryagin1987mathematical,boltyanskii1960theory} provides necessary conditions for optimality with a layer-wise maximization requirement on the Hamiltonian function. For each layer $t\in [T]:=\{0,1,\ldots, T-1\}$, we define the Hamiltonian function $H_t:\mathbb{R}^{d_t}\times \mathbb{R}^{d_{t+1}}\times \Theta_t\rightarrow \mathbb{R}$ as
$$
H_t(x,p,\theta_t)=p\cdot f_t(x,\theta_t)-\frac{1}{B}R_t(x,\theta_t).
$$
The PMP for continuous time differential game has been well studied in the literature ~\cite{evans2005introduction}. Here, we present the PMP for our discrete time differential game \eqref{eq:problem}.

\begin{theorem}
(PMP for adversarial training) Assume $\ell_i$ is twice continuous differentiable, $f_t(\cdot,\theta),R_t(\cdot,\theta)$ are twice continuously differentiable with respect to $x$; $f_t(\cdot,\theta),R_t(\cdot,\theta)$ together with their $x$ partial derivatives are uniformly bounded in $t$ and $\theta$, and the sets $\{f_t(x,\theta):\theta\in\Theta_t\}$ and $\{R_t(x,\theta):\theta\in\Theta_t\}$ are convex for every $t$ and $x\in\mathbb{R}^{d_t}$. Denote $\theta^*$  as the solution of the problem~\eqref{eq:problem}, then there exists co-state processes $p_i^*:=\{p^*_{i,t}:t\in[T]\}$ such that the following holds for all $t\in[T]$ and $i\in[B]$:
\begin{align}
\label{equ:1}
    &x_{i,t+1}^*=\nabla_p H_t(x_{i,t}^*,p_{i,t+1}^*,\theta_t^*), &x_{i,0}^*=x_{i,0}+\eta^*_i\\
    \label{equ:2}
    &p_{i,t}^*=\nabla_x H_t(x_{i,t}^*,p_{i,t+1}^*,\theta_t^*), &p_{i,T}^*=-\frac{1}{B}\nabla \ell_i(x_{i,T}^*)
\end{align}
At the same time, the parameters of the first layer $\theta_0^*\in\Theta_0$ and the optimal adversarial perturbation $\eta^*_i$ satisfy
\begin{align}
\sum_{i=1}^B H_0(x_{i,0}^*+\eta_i,p_{i,1}^*,\theta_0^*)\ge
    \sum_{i=1}^B H_0(x_{i,0}^*+\eta^*_i,p_{i,1}^*,\theta_0^*)\ge \sum_{i=1}^B H_0(x_{i,0}^*+\eta_i^*,p_{i,1}^*,\theta_0),\\ \forall \theta_0\in\Theta_0,\|\eta_i\|_\infty\le\epsilon
\end{align}
and the parameters of the other layers $\theta_t^*\in\Theta_t, t\in[T]$ maximize the Hamiltonian functions
\begin{align}
    \sum_{i=1}^B H_t(x_{i,t}^*,p_{i,t+1}^*,\theta_t^*)\ge \sum_{i=1}^B H_t(x_{i,t}^*,p_{i,t+1}^*,\theta_t),\  \forall \theta_t\in\Theta_t
\end{align}
\end{theorem}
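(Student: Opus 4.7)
My plan is to derive the PMP by reducing the min--max game to two one-sided discrete optimal control problems that share a common adjoint process. The decisive structural observation is that the adversary $\eta=(\eta_1,\ldots,\eta_B)$ enters the dynamics only through the initial data of the first layer, $x_{i,0}+\eta_i$, so it plays the role of an extra "control" attached to $f_0$ on an equal footing with $\theta_0$. This is the same decoupling that motivates YOPO, and it is what lets me read off a PMP for the game without invoking continuous-time differential game machinery.

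First, I would fix $\eta=\eta^*$ and regard $\theta$ as the sole control, turning the problem into a standard discrete optimal control problem. I would introduce the co-state $p^*_{i,t}$ through the Lagrangian
\begin{equation*}
L = \frac{1}{B}\sum_{i}\ell_i(x_{i,T}) + \frac{1}{B}\sum_{i,t} R_t(x_{i,t},\theta_t) - \sum_{i,t} p_{i,t+1}\cdot\bigl(x_{i,t+1}-f_t(x_{i,t},\theta_t)\bigr),
\end{equation*}
and set variations in $x_{i,t}$ to zero. The forward equation $x^*_{i,t+1}=\nabla_p H_t$ is just the dynamics rewritten in Hamiltonian form; variations in intermediate states yield $p^*_{i,t}=\nabla_x H_t$; the transversality condition at $t=T$ yields $p^*_{i,T}=-\tfrac{1}{B}\nabla \ell_i(x^*_{i,T})$; and the initial condition $x^*_{i,0}=x_{i,0}+\eta^*_i$ is inherited directly from the problem. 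The global Hamiltonian maximality over $\theta_t$ for $t\ge 1$ then follows from the Halkin--Boltyanskii discrete-time PMP, whose hypotheses are exactly the stated convexity of the image sets $\{f_t(x,\theta):\theta\in\Theta_t\}$ and $\{R_t(x,\theta):\theta\in\Theta_t\}$ together with the uniform $C^2$ bounds on $f_t$ and $R_t$.

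Next, I would fix $\theta=\theta^*$ and treat $\eta$ as the control of the inner maximization. Re-running the same Lagrangian calculation, I would observe that the adjoint recursion it produces is identical to the one from the first subproblem with the same terminal condition, hence equals the same process $p^*$ by uniqueness of the backward sweep. Applying the PMP to the maximization produces the left inequality of the saddle-point statement on $H_0$, while the right inequality is precisely the $t=0$ case of the Hamiltonian maximization obtained in the previous step; together they form the announced saddle point at the first layer.

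The main obstacle will be justifying the global (not merely stationary) Hamiltonian extremality in a purely discrete setting, since there is no continuous relaxation to fall back on. The standard remedy is a needle variation at a single time step $t_0$: perturb the control at $t_0$, propagate the perturbation forward by the linearized dynamics, and score its effect on the cost using the co-state. The convexity of $\{f_t(x,\theta):\theta\in\Theta_t\}$ is exactly what allows an arbitrary admissible $\theta$ (and, at the first layer, $\eta$) to appear as a tangent direction of the reachable set rather than only a local perturbation of $\theta^*$, and the uniform $C^2$ bounds give the Gronwall-type estimates needed to push the perturbation cleanly through the remaining $T-t_0$ layers. The one genuinely new subtlety beyond a textbook PMP proof is matching the two adjoints across the two subproblems, but this reduces to the fact that both subproblems share the same optimal trajectory $x^*$ and that the backward recursion is linear and uniquely solvable.
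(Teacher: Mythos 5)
Your proposal follows essentially the same route as the paper's proof: first establish the discrete-time PMP (Halkin-type, resting on the convexity of the image sets and the linearized reachable-set/separation argument) for the minimizing player with $\eta$ frozen at $\eta^*$, then apply the same machinery to the inner maximization over $\eta$ at fixed $\theta^*$ and observe that the two subproblems share the backward adjoint sweep, so the adversary's optimality condition becomes a minimization of $H_0$ while the weights maximize it. The only difference is bookkeeping: the paper realizes the adversary's side by negating the loss (obtaining an auxiliary adjoint $\tilde p^*=-p^*$ and then translating back), whereas you keep the sign of $p^*$ and flip the Hamiltonian inequality directly --- these are the same argument.
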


\begin{proof}
Proof is in the supplementary materials.
\end{proof}

From the theorem, we can observe that the adversary $\eta$ is only coupled with the parameters of the first layer $\theta_0$. This key observation inspires the design of YOPO. 

\subsection{PMP and Back-Propagation Based Gradient Descent}

The classical back-propagation based gradient descent algorithm \cite{lecun1988theoretical} can be viewed as an algorithm attempting to solve the PMP. Without loss of generality, we can let the regularization term $R=0$, since we can simply add an extra dynamic $w_t$ to evaluate the regularization term $R$, \emph{i.e.}
$$
w_{t+1}=w_t+R_t(x_t,\theta_t),\, w_0=0.
$$
We append $w$ to $x$ to study the dynamics of a new $(d_t+1)$-dimension vector and change $f_t(x,\theta_t)$ to $(f_t(x,\theta_t),w+R_t(x,\theta_t))$. The relationship between the PMP and the back-propagation based gradient descent method was first observed by Li et al. ~\cite{li2017maximum}. They showed that the forward dynamical system Eq.\eqref{equ:1} is the same as the neural network forward propagation. The backward dynamical system Eq.\eqref{equ:2} is the back-propagation, which is formally described by the following lemma.
\begin{lemma}\label{p_lemma}
$$
p_t^*=\nabla_x H_t(x_t^*,p_{t+1}^*,\theta_t^*)=\nabla_x f(x_t^*,\theta_t^*)^T p_{t+1}=(\nabla_{x_t} x_{t+1}^*)^T\cdot -\nabla_{x_{t+1}}(\ell(x_T))=-\nabla_{x_t}(\ell(x_T)).
$$
\end{lemma}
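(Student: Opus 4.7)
The plan is to unfold the Hamiltonian using the $R=0$ reduction already justified just above the lemma, and then identify the co-state $p_t^*$ with the back-propagated gradient by a reverse induction on the layer index. With $R=0$, the Hamiltonian simplifies to $H_t(x,p,\theta)=p\cdot f_t(x,\theta)$, so differentiating in $x$ immediately gives $\nabla_x H_t(x_t^*,p_{t+1}^*,\theta_t^*)=\nabla_x f_t(x_t^*,\theta_t^*)^{\top} p_{t+1}^*$. Combining this identity with the backward PMP equation \eqref{equ:2} yields the first two equalities of the chain stated in the lemma, and reduces the remaining content to proving that $p_{t+1}^*=-\nabla_{x_{t+1}}\ell(x_T^*)$.

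For that identification I would run a reverse induction on $t$. The base case $t=T$ is immediate from the terminal condition $p_T^*=-\nabla \ell(x_T^*)$ supplied by the PMP (dropping the sample index $i$ and the $1/B$ normalization to match the lemma's notation). For the inductive step, assume $p_{t+1}^*=-\nabla_{x_{t+1}}\ell(x_T^*)$. The forward dynamics $x_{t+1}^*=f_t(x_t^*,\theta_t^*)$ give $\nabla_{x_t} x_{t+1}^*=\nabla_x f_t(x_t^*,\theta_t^*)$, hence
\[
p_t^* \;=\; \nabla_x f_t(x_t^*,\theta_t^*)^{\top} p_{t+1}^* \;=\; -\bigl(\nabla_{x_t}x_{t+1}^*\bigr)^{\top}\nabla_{x_{t+1}}\ell(x_T^*) \;=\; -\nabla_{x_t}\ell(x_T^*),
\]
where the last step is the multivariable chain rule applied to the composition $x_t \mapsto x_{t+1}^* \mapsto \cdots \mapsto x_T^* \mapsto \ell$. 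This closes the induction and supplies the last two equalities of the lemma.

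The hard part is essentially nonexistent: the statement is a repackaging of the fact that the PMP co-state recursion is precisely back-propagation once the regularizer is absorbed into the state. The only care required is notational bookkeeping, namely the row-versus-column convention that accounts for the transposes, and remembering that the reduction to $R=0$ is lossless because one may append a scalar coordinate $w_t$ with $w_{t+1}=w_t+R_t(x_t,\theta_t)$ to the state, exactly as the paragraph preceding the lemma already sets up.
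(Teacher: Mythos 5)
Your proposal is correct and matches what the paper intends: the displayed chain of equalities in the lemma is essentially its own proof, read left to right, with the identification $p_{t+1}^*=-\nabla_{x_{t+1}}\ell(x_T^*)$ justified by exactly the reverse induction from the terminal condition $p_T^*=-\nabla\ell(x_T^*)$ that you make explicit. The $R=0$ reduction via the appended coordinate $w_t$ and the chain-rule step are the same devices the paper sets up in the paragraph preceding the lemma, so there is nothing to add.
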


To solve the maximization of the Hamiltonian, a simple way is the gradient ascent:
\begin{equation}
\label{eq:bp}
        \theta_t^1 =\theta_t^0+\alpha\cdot \nabla_\theta \sum_{i=1}^B H_t(x_{i,t}^{\theta^0},p_{i,t+1}^{\theta^0},\theta_t^0).
\end{equation}

\begin{theorem}
The update (\ref{eq:bp}) is equivalent to gradient descent method for training networks\cite{li2017maximum,pmlr-v80-li18b}.
\end{theorem}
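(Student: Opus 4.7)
The plan is to compute $\nabla_\theta H_t$ at the frozen state/co-state pair $(x_{i,t}^{\theta^0}, p_{i,t+1}^{\theta^0})$ and then match it, term by term, to the expression for $\nabla_{\theta_t}\ell_i(x_{i,T})$ produced by ordinary backpropagation. First I would invoke the reduction already sketched in the excerpt to absorb $R_t$ into an augmented state coordinate $w_t$, so that without loss of generality $R_t\equiv 0$ and the Hamiltonian collapses to $H_t(x,p,\theta_t)=p\cdot f_t(x,\theta_t)$. Differentiating in $\theta_t$ while holding $x^{\theta^0}, p^{\theta^0}$ fixed (which is exactly what (\ref{eq:bp}) prescribes, and is also exactly how backpropagation treats the cached activations and upstream gradients) gives
\[
\nabla_\theta H_t(x_{i,t}^{\theta^0}, p_{i,t+1}^{\theta^0}, \theta_t^0) \;=\; \nabla_\theta f_t(x_{i,t}^{\theta^0},\theta_t^0)^{\!\top}\, p_{i,t+1}^{\theta^0}.
\]

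The next step is to plug in Lemma \ref{p_lemma}, which identifies $p_{i,t+1}^{\theta^0}=-\nabla_{x_{i,t+1}}\ell_i(x_{i,T}^{\theta^0})$ (up to the $1/B$ normalization coming from the terminal condition in (\ref{equ:2})). Combining this with the forward recursion $x_{i,t+1}=f_t(x_{i,t},\theta_t)$ and the chain rule yields
\[
\nabla_\theta f_t(x_{i,t},\theta_t)^{\!\top}\nabla_{x_{i,t+1}}\ell_i(x_{i,T}) \;=\; \nabla_{\theta_t}\ell_i(x_{i,T}),
\]
so summing over the mini-batch produces $\nabla_\theta \sum_i H_t = -\tfrac{1}{B}\sum_i \nabla_{\theta_t}\ell_i(x_{i,T}^{\theta^0})$. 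Substituting into (\ref{eq:bp}) rewrites the Hamiltonian ascent update as the standard (mean) SGD step on the loss, which is the claim.

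The remaining work is purely bookkeeping: I would (i) spell out the $R_t\equiv 0$ reduction by writing out the augmented dynamics $(x_{t+1},w_{t+1})=(f_t(x_t,\theta_t), w_t+R_t(x_t,\theta_t))$ and checking that its PMP reproduces the original one with $R_t$ reinstated, and (ii) be explicit that the $\theta$-derivative of $H_t$ in (\ref{eq:bp}) is a partial with $(x^{\theta^0},p^{\theta^0})$ held fixed, so no implicit-function terms arise. The step that most deserves care is the sign and normalization convention: the minus sign in the terminal condition $p_{i,T}^*=-\tfrac{1}{B}\nabla\ell_i(x_{i,T}^*)$ is precisely what turns Hamiltonian ascent into loss descent, and the $1/B$ factor is what makes the resulting update match the conventional mini-batch mean-gradient SGD step rather than a sum-gradient step. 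Once these are pinned down, the equivalence is an immediate consequence of the chain rule along the forward dynamics together with the already-established fact (Lemma \ref{p_lemma}) that the discrete co-state recursion is backpropagation.
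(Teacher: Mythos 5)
Your argument is correct and follows exactly the route the paper intends: use Lemma~\ref{p_lemma} to identify the co-state $p_{t+1}$ with $-\tfrac{1}{B}\nabla_{x_{t+1}}\ell$, differentiate $H_t=p\cdot f_t(x,\theta_t)$ in $\theta_t$ with the cached $(x^{\theta^0},p^{\theta^0})$ held fixed, and observe that the sign of the terminal condition converts Hamiltonian ascent into mini-batch gradient descent (the paper itself supplies no further detail, deferring to~\cite{li2017maximum,pmlr-v80-li18b}). Your attention to the $R_t\equiv 0$ reduction and the $1/B$ normalization is exactly the right bookkeeping.
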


\subsection{YOPO from PMP's View Point}
Based on the relationship between back-propagation and the Pontryagin's Maximum Principle, in this section, we provide a new understanding of YOPO, \emph{i.e.} solving the PMP for the differential game. Observing that, in the PMP,  the adversary $\eta$ is only coupled with the weight of the first layer $\theta_0$. Thus we can update the adversary via minimizing the Hamiltonian function instead of directly attacking the loss function, described in Algorithm \ref{alg:A}.

For YOPO-$m$-$n$, to approximate the exactly minimization of the Hamiltonian, we perform $n$ times gradient descent to update the adversary. Furthermore, in order to make the calculation of the adversary more accurate, we iteratively pass one data point $m$ times. Besides, the network weights are optimized via performing the gradient ascent to Hamiltonian, resulting in the gradient based YOPO proposed in Section \ref{sec:grad_yopo}.

\begin{algorithm}[H]
\caption{YOPO (\textbf{Y}ou \textbf{O}nly \textbf{P}ropagate  \textbf{O}nce)}
\label{alg:A}
\begin{algorithmic}
\STATE {Randomly initialize the network parameters or using a pre-trained network.} 
\REPEAT 
    \STATE Randomly select a  mini-batch $\mathcal{B}=\{(x_1,y_1),\cdots,(x_B,y_B)\}$ from training set.
    \STATE Initialize $\eta_i,i=1,2,\cdots, B$  by sampling from a uniform distribution between [-$\epsilon$, $\epsilon$]
    \FOR{$j=1$ to $m$}
        \STATE $x_{i,0}=x_i+\eta_i^j,i=1,2,\cdots,B$
        \FOR{$t=0$ to $T-1$}
            \STATE $x_{i,t+1}=\nabla_p H_t(x_{i,t},p_{i,t+1},\theta_t),i=1,2,\cdots,B$
        \ENDFOR
        \STATE $p_{i,T}=-\frac{1}{B}\nabla \ell(x_{i,T}^*),i=1,2,\cdots,B$
        \FOR{$t=T-1$ to $0$}
            \STATE $p_{i,t}=\nabla_x H_t(x_{i,t},p_{i,t+1},\theta_t),i=1,2,\cdots,B$
        \ENDFOR
        \STATE $\eta_i^j = \arg\min_{\eta_i}H_0(x_{i,0}+\eta_i,p_{i,0},\theta_0),  i=1,2,\cdots,B$
        
        
    \ENDFOR
    
    \FOR{$t=T-1$ to $1$}
            \STATE $\theta_t = \arg\max_{\theta_t} \sum_{i=1}^B H_t(x_{i,t},p_{i,t+1},\theta_t)$
        \ENDFOR
        \STATE $\theta_0 = \arg\max_{\theta_0} \frac1{m}\sum_{k=1}^m\sum_{i=1}^B H_0(x_{i,0}+\eta_i^j,p_{i,1},\theta_0)$
    
\UNTIL Convergence
\end{algorithmic}
\end{algorithm}

\section{Experiments}
\label{sec:exp}


\subsection{YOPO for Adversarial Training}

To demonstrate the effectiveness of YOPO, we conduct experiments on MNIST and CIFAR10. We find that the models  trained with YOPO have comparable performance with that of the PGD adversarial training, but with a much fewer computational cost. We also compare our method with a concurrent method "For Free"\cite{shafahi2019adversarial}, and the result shows that our algorithm can achieve comparable performance with around 2/3 GPU time of their official implementation.



\begin{figure}[htbp]
\centering   
\subfigure["Small CNN" \cite{zhang2019theoretically} Result on MNIST] 
{
	\begin{minipage}{7cm}
    \includegraphics[width=3in]{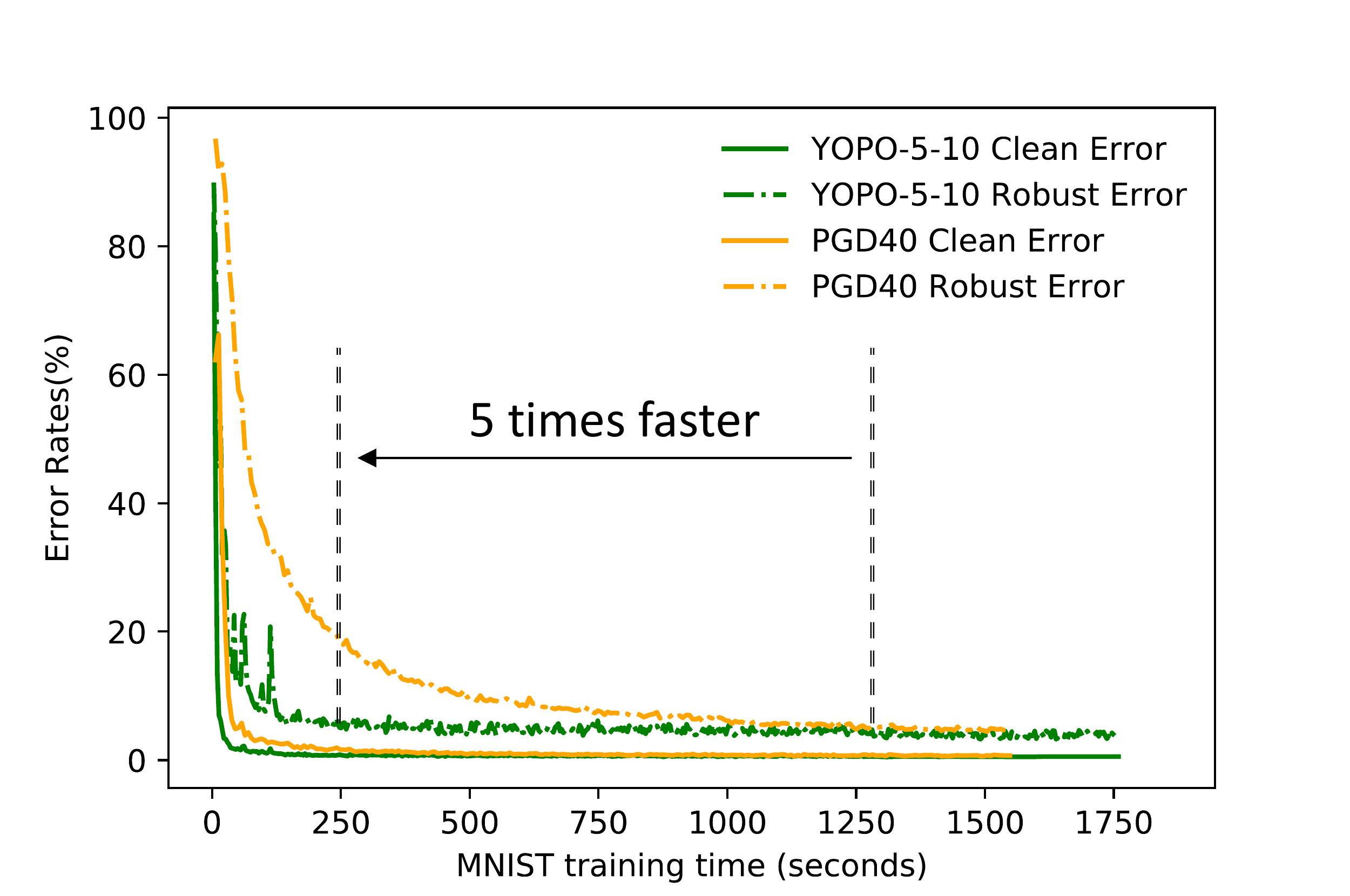}
	\end{minipage}
	\label{fig:mnist}
}
\subfigure[PreAct-Res18 Results on CIFAR10] 
{
	\begin{minipage}{6cm}
	\centering      
    \includegraphics[width=3in]{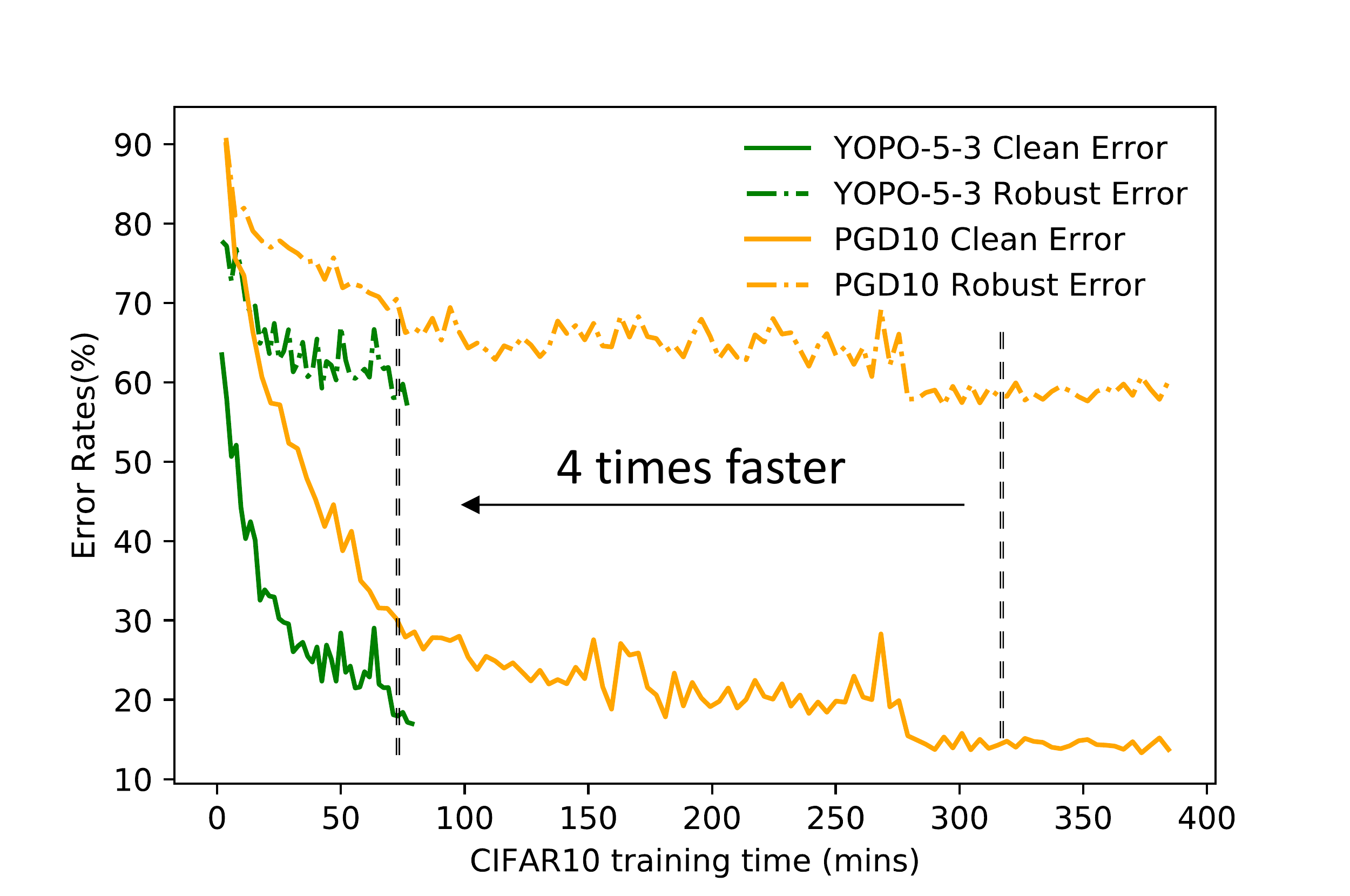}
	\end{minipage}
	\label{fig:cifar10}
}
\caption{Performance w.r.t. training time} 
\end{figure}

\paragraph{MNIST} We achieve comparable results with the best in [5] within 250 seconds, while it takes PGD-40 more than 1250s to reach the same level. The accuracy-time curve is shown in Figuire~\ref{fig:mnist}. Naively reducing the backprop times of PGD-40 to PGD-10 will harm the robustness, as can be seen in Table \ref{tab:mnist}. Experiment details can be seen in supplementary materials.

\begin{table}[H]
  \centering
\begin{tabular}{c|ccc}
\hline\hline
{Training Methods} & {Clean Data} & {PGD-40 Attack} & {CW Attack}  \\ 
\hline\hline
{PGD-5 \cite{madry2018towards}} & {99.43\%} & {42.39\%} & {77.04\%}  \\
{PGD-10 \cite{madry2018towards}} & {99.53\%} & {77.00\%} & {82.00\%}  \\
{PGD-40 \cite{madry2018towards}} & \multicolumn{1}{>{\columncolor{mycyan}}c}{99.49\%} & \multicolumn{1}{>{\columncolor{mycyan}}c}{96.56\%} & \multicolumn{1}{>{\columncolor{mycyan}}c}{93.52\%}  \\
\hline
{YOPO-5-10 (Ours)} & \multicolumn{1}{>{\columncolor{mycyan}}c}{99.46\%} & \multicolumn{1}{>{\columncolor{mycyan}}c}{96.27\%} & \multicolumn{1}{>{\columncolor{mycyan}}c}{93.56\%}\\
\hline
\end{tabular}
\caption{Results of MNIST robust training. YOPO-5-10 achieves state-of-the-art result as PGD-40. Notice that for every epoch, PGD-5 and YOPO-5-3 have approximately the same computational cost.}
\label{tab:mnist}
\end{table}



\paragraph{CIFAR10.}  \cite{madry2018towards} performs  a 7-step PGD  to generate adversary while training. As a comparison, we test YOPO-$3$-$5$ and YOPO-$5$-$3$ with a step size of 2/255. We experiment with two different network architectures.

Under PreAct-Res18, for YOPO-$5$-$3$, it achieves comparable robust accuracy with \cite{madry2018towards} with around half computation for every epoch. The accuracy-time curve is shown in Figuire~\ref{fig:cifar10}.
The quantitative results can be seen in Tbale \ref{tab:cifar101}.  Experiment details can be seen in supplementary materials.

\begin{table}[H]
  \centering 
\begin{tabular}{c|ccc}
\hline\hline
{Training Methods} & {Clean Data} & {PGD-20 Attack} & {CW Attack} \\ \hline\hline
{PGD-3 \cite{madry2018towards}} & {88.19\%} & {32.51\%} & {54.65\%}  \\
{PGD-5 \cite{madry2018towards}} & {86.63\%} & {37.78\%} & {57.71\%} \\
{PGD-10 \cite{madry2018towards}} & \multicolumn{1}{>{\columncolor{mycyan}}c}{84.82\%} & \multicolumn{1}{>{\columncolor{mycyan}}c}{41.61\%} & \multicolumn{1}{>{\columncolor{mycyan}}c}{58.88\%} \\
\hline
{YOPO-3-5 (Ours)} & {82.14\%} & {38.18\%} & {55.73\%} \\
{YOPO-5-3 (Ours)} & \multicolumn{1}{>{\columncolor{mycyan}}c}{83.99\%} & \multicolumn{1}{>{\columncolor{mycyan}}c}{44.72\%} & \multicolumn{1}{>{\columncolor{mycyan}}c}{59.77\%} \\
\hline
\end{tabular}
\caption{Results of PreAct-Res18 for CIFAR10. Note that for every epoch, PGD-3 and YOPO-3-5 have the approximately same computational cost, and so do PGD-5 and YOPO-5-3.}
\label{tab:cifar101}
\end{table}

As for Wide ResNet34, YOPO-5-3 still achieves similar acceleration against PGD-10, as shown in Table \ref{tab:cifar10min}.
We also test PGD-3/5 to show that naively reducing backward times for this minmax problem \cite{madry2018towards} cannot produce comparable results within the same computation time as YOPO. Meanwhile, YOPO-3-5 can achieve more aggressive speed-up with only a slight drop in robustness.



\begin{table}[htbp]
 \centering 
 \begin{threeparttable}
\begin{tabular}{c|cc|c}
\hline\hline
{Training Methods} & {Clean Data} & {PGD-20 Attack} & {Training Time (mins)}  \\ \hline\hline

{Natural train} & {95.03\%} & {0.00\%} & {233}  \\
\hline \hline
{PGD-3 \cite{madry2018towards}} & {90.07\%} & {39.18\%} & {1134} \\

{PGD-5 \cite{madry2018towards}} & {89.65\%} & {43.85\%} & {1574} \\

{PGD-10 \cite{madry2018towards}} 
& \multicolumn{1}{>{\columncolor{mycyan}}c}{87.30\%} & \multicolumn{1}{>{\columncolor{mycyan}}c|}{47.04\%} & \multicolumn{1}{>{\columncolor{mycyan}}c}{2713} \\
\hline

{Free-8 \cite{shafahi2019adversarial}\tnote{1}} & \multicolumn{1}{>{\columncolor{mycyan}}c}{86.29\%} &\multicolumn{1}{>{\columncolor{mycyan}}c|} {47.00\%}  & \multicolumn{1}{>{\columncolor{mycyan}}c}{667} \\
\hline
{YOPO-3-5 (Ours)} & {87.27\%} & {43.04\%} & {299} \\

{YOPO-5-3 (Ours)} & \multicolumn{1}{>{\columncolor{mycyan}}c}{86.70\%} & \multicolumn{1}{>{\columncolor{mycyan}}c|}{47.98\%} &  \multicolumn{1}{>{\columncolor{mycyan}}c}{\textbf{476}} \\
\hline
\end{tabular}
\begin{tablenotes}
\item[1] {Code from \url{https://github.com/ashafahi/free_adv_train}.}
\end{tablenotes}
\caption{Results of Wide ResNet34 for CIFAR10.}
\label{tab:cifar10min}
\end{threeparttable}
\end{table}

\subsection{YOPO for TRADES}
TRADES\cite{zhang2019theoretically} formulated a new min-max objective function of adversarial defense and achieves the state-of-the-art adversarial defense results. The details of algorithm and experiment setup are in supplementary material, and quantitative results are demonstrated in Table \ref{tab:TRADES_pre18}.

\begin{table}[H]
    \centering
    \begin{tabular}{c|ccc|c}
        \hline \hline
         Training Methods & Clean Data & PGD-20 Attack & CW Attack & Training Time (mins)\\
         \hline\hline
         {TRADES-10\cite{zhang2019theoretically}}& \multicolumn{1}{>{\columncolor{mycyan}}c}{86.14\%} &\multicolumn{1}{>{\columncolor{mycyan}}c} {44.50\%}
         &\multicolumn{1}{>{\columncolor{mycyan}}c|} {58.40\%}
         &\multicolumn{1}{>{\columncolor{mycyan}}c}{633}\\
         \hline
         {TRADES-YOPO-3-4 (Ours)} &\multicolumn{1}{>{\columncolor{mycyan}}c} {87.82\%}&\multicolumn{1}{>{\columncolor{mycyan}}c}{46.13\%}
         &\multicolumn{1}{>{\columncolor{mycyan}}c|}{59.48\%}& \multicolumn{1}{>{\columncolor{mycyan}}c}{259} \\
          TRADES-YOPO-2-5 (Ours) & 88.15\%& 42.48\% & 59.25\% & 218 \\
          \hline
    \end{tabular}
    \caption{Results of training PreAct-Res18 for CIFAR10 with TRADES objective}
    \label{tab:TRADES_pre18}
\end{table}


\section{Conclusion}
\label{sec:con}
In this work, we have developed an efficient strategy for accelerating adversarial training. We recast the adversarial training of deep neural networks as a discrete time differential game and derive a Pontryagin's Maximum Principle (PMP) for it. Based on this maximum principle, we discover that the  adversary is only coupled with the weights of the first layer. This motivates us to split the adversary updates from the back-propagation gradient calculation. The proposed algorithm, called YOPO, avoids computing full forward and backward propagation for too many times, thus effectively reducing the computational time as supported by our experiments.

\section*{Acknowledgement}
We thank Di He and Long Chen for beneficial discussion. Zhanxing Zhu is supported in part by National Natural Science Foundation of China (No.61806009), Beijing Natural Science Foundation (No. 4184090) and Beijing Academy  of Artificial Intelligence (BAAI).  Bin Dong is supported in part by Beijing Natural Science Foundation (No. Z180001) and Beijing Academy  of Artificial Intelligence (BAAI). Dinghuai Zhang is supported by the Elite Undergraduate Training Program of Applied Math of the School of Mathematical Sciences at Peking University. 

\newpage


\bibliography{refs}
\bibliographystyle{plain}



\title{Supplementary Materials:\\You Only Propagate Once: Accelerating Adversarial Training via Maximal Principle 
}


\appendix

\section{Proof Of The Theorems}

\subsection{Proof of Theorem 1}
In this section we give the full statement of the maximum principle for the adversarial training and present a proof. Let's start from the case of the natural training of neural networks.

\begin{theorem*}
(PMP for adversarial training) Assume $\ell_i$ is twice continuous differentiable, $f_t(\cdot,\theta),R_t(\cdot,\theta)$ are twice continuously differentiable with respect to $x$, and $f_t(\cdot,\theta),R_t(\cdot,\theta)$ together with their $x$ partial derivatives are uniformly bounded in $t$ and $\theta$. The sets $\{f_t(x,\theta):\theta\in\Theta_t\}$ and $\{R_t(x,\theta):\theta\in\Theta_t\}$ are convex for every $t$ and $x\in\mathbb{R}^{d_t}$. Let $\theta^*$ to be the solution of  
\begin{align}
    \min_{\theta\in\Theta} \max_{\|\eta\|_\infty \le \epsilon}&  J(\theta,\eta):=\frac{1}{N}\sum_{i=1}^N\ell_i(x_{i,T})+\frac{1}{N}\sum_{i=1}^N\sum_{t=0}^{T-1}R_t(x_{i,t},\theta_t) \\
    \text{subject to }&x_{i,1}=f_0(x_{i,0}+\eta_i;\theta_0), i=1,2,\cdots,N\\
    &x_{i,t+1}=f_t(x_{i,t},\theta_t), t = 1,2,\cdots,T-1.
\end{align}

Then there exists co-state processes $p_i^*:={p^*_{i,t}:t=0,\cdots,T}$ such that the following holds for all $t\in[T]$ and $i\in[N]$:
\begin{align}
    &x_{i,t+1}^*=\nabla_p H_t(x_{i,t}^*,p_{i,t+1}^*,\theta_t^*), &x_{i,0}^*=x_{i,0}+\eta_i^*\\
    &p_{i,t}^*=\nabla_x H_t(x_{i,t}^*,p_{i,t+1}^*,\theta_t^*), &p_{i,T}^*=-\frac{1}{N}\nabla \ell_i(x_{i,T}^*)
\end{align}
Here $H$ is the per-layer defined Hamiltonian function $H_t:\mathbb{R}^{d_t}\times \mathbb{R}^{d_{t+1}}\times \Theta_t\rightarrow \mathbb{R}$ as
$$
H_t(x,p,\theta_t)=p\cdot f_t(x,\theta_t)-\frac{1}{N}R_t(x,\theta_t)
$$

At the same time, the parameter of the first layer $\theta_0^*\in\Theta_0$ and the best perturbation $\eta^*$ satisfy
\begin{align}
\sum_{i=1}^N H_0(x_{i,0}^*+\eta_i,p_{i,1}^*,\theta_0^*)\ge
    \sum_{i=1}^N H_0(x_{i,0}^*+\eta^*_i,p_{i,1}^*,\theta_0^*)\ge \sum_{i=1}^N H_0(x_{i,0}^*+\eta^*_i,p_{i,1}^*,\theta_0), \forall \theta_0\in\Theta_0,\|\eta_i\|_\infty\le\epsilon
\end{align}
while parameter of the other layers $\theta_t^*\in\Theta_t, t=1,2,\cdots,T-1$ will maximize the Hamiltonian functions
\begin{align}
    \sum_{i=1}^N H_t(x_{i,t}^*,p_{i,t+1}^*,\theta_t^*)\ge \sum_{i=1}^N H_t(x_{i,t}^*,p_{i,t+1}^*,\theta_t), \forall \theta_t\in\Theta_t
\end{align}
\end{theorem*}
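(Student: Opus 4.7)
The plan is to exploit the saddle-point structure of the min-max problem and reduce it to two one-sided discrete-time PMPs, in the spirit of Halkin (1966) and Li--Chen--E (2017). Since $(\theta^*, \eta^*)$ solves the min-max problem, it is a saddle point of $J$: $\eta^*$ maximizes $J(\theta^*, \cdot)$ and $\theta^*$ minimizes $J(\cdot, \eta^*)$. I would split the necessary conditions into two sub-problems that share the same optimal state trajectory $\{x_{i,t}^*\}$ and show that a single co-state process $\{p_{i,t}^*\}$ certifies both.

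First I would apply the standard discrete PMP to the $\theta$-side minimization $\min_\theta J(\theta, \eta^*)$. This is a plain optimal-control problem with controls $(\theta_0, \ldots, \theta_{T-1})$ and dynamics $x_{i,1} = f_0(x_{i,0} + \eta_i^*, \theta_0)$, $x_{i,t+1} = f_t(x_{i,t}, \theta_t)$; the adversary appears only as a frozen shift in the initial condition. Under the stated smoothness and boundedness on $\ell_i, f_t, R_t$ and the convexity of the image sets $\{f_t(x, \theta) : \theta \in \Theta_t\}$ and $\{R_t(x, \theta) : \theta \in \Theta_t\}$, Halkin's discrete PMP yields existence of co-states $\{p_{i,t}^*\}$ satisfying the forward and backward Hamiltonian dynamics, the terminal condition $p_{i,T}^* = -\frac{1}{N}\nabla \ell_i(x_{i,T}^*)$, and the layerwise maximization $\theta_t^* \in \arg\max_{\theta_t} \sum_i H_t(x_{i,t}^*, p_{i,t+1}^*, \theta_t)$ for every $t \in [T]$, which already delivers the right-hand inequality at layer $0$ and the Hamiltonian maximization for $t \ge 1$.

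Second I would apply the PMP to the $\eta$-side maximization $\max_{\|\eta\|_\infty \le \epsilon} J(\theta^*, \eta)$, whose only control $\eta$ enters the dynamics at layer $0$ through $x_{i,0} + \eta_i$. The corresponding adjoint equation is linear in the adjoint given the frozen $(x^*, \theta^*)$, so a short sign-flip calculation shows that the max-side adjoint is precisely $-p_{i,t}^*$; this is the key point that makes the max-side Hamiltonian coincide (up to sign) with the $H_t$ from the $\theta$-side. The max-side extremality at $t = 0$ then translates to $\eta_i^* \in \arg\min_{\|\eta_i\|_\infty \le \epsilon} \sum_i H_0(x_{i,0}^* + \eta_i, p_{i,1}^*, \theta_0^*)$, which is the left-hand inequality of the stated saddle condition.

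The main obstacle I expect is keeping the sign conventions consistent so that both sub-problems literally use the same co-state $p^*$ rather than two adjoints differing by a sign. The cleanest route is to introduce $p_{i,t+1}$ directly as the multiplier attached to the equality constraint $x_{i,t+1} = f_t(x_{i,t}, \theta_t)$ in a Lagrangian for the saddle problem, derive the backward recursion and terminal condition from stationarity in $x_{i,t}$ once and for all, and then read off the Hamiltonian max-in-$\theta$ and min-in-$\eta$ statements from the two partial saddle conditions. A subsidiary technical point is the joint layer-$0$ control $(\eta, \theta_0)$; I would treat it as a product control in $\{\|\eta\|_\infty \le \epsilon\} \times \Theta_0$ and use the convexity of $\{f_0(\cdot, \theta_0) : \theta_0 \in \Theta_0\}$ together with the linearity of the shift $\eta \mapsto x_{i,0} + \eta_i$ to justify the needle-variation/separation step at the first layer.
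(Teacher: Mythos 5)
Your proposal is correct and follows essentially the same route as the paper: the paper likewise splits the saddle point into two one-sided discrete PMPs, first applying a standard discrete-time maximum principle (proved via the linearization/separation lemma of Li et al.) to the $\theta$-minimization with $\eta^*$ frozen as a shift of the initial condition, and then to the $\eta$-maximization with $\theta^*$ frozen by negating the loss, observing exactly the sign flip $\tilde p_{i,t}^* = -p_{i,t}^*$ that converts the max-side Hamiltonian maximization into the stated minimization of $H_0$ over $\eta$. The only cosmetic difference is that you invoke Halkin's discrete PMP as a black box where the paper re-derives it as a lemma.
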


\begin{proof}

We first propose PMP for discrete time dynamic system and utilize it directly gives out the proof of PMP for adversarial training.

\begin{lemma}\label{pmp_lemma}
(PMP for discrete time dynamic system) Assume $\ell$ is twice continuous differentiable, $f_t(\cdot,\theta),R_t(\cdot,\theta)$ are twice continuously differentiable with respect to $x$, and $f_t(\cdot,\theta),R_t(\cdot,\theta)$ together with their $x$ partial derivatives are uniformly bounded in $t$ and $\theta$. The sets $\{f_t(x,\theta):\theta\in\Theta_t\}$ and $\{R_t(x,\theta):\theta\in\Theta_t\}$ are convex for every $t$ and $x\in\mathbb{R}^{d_t}$. Let $\theta^*$ to be the solution of  
\begin{align}
    \min_{\theta\in\Theta} \max_{\|\eta\|_\infty \le \epsilon}&  J(\theta,\eta):=\frac{1}{N}\sum_{i=1}^N\ell_i(x_{i,T})+\frac{1}{N}\sum_{i=1}^N\sum_{t=0}^{T-1}R_t(x_{i,t},\theta_t)\\
    \text{subject to }&x_{i,t+1}=f_t(x_{i,t},\theta_t), i\in[N]
    , t = 0,1,\cdots,T-1.
\end{align}

There exists co-state processes $p_i^*:={p^*_{i,t}:t=0,\cdots,T}$ such that the following holds for all $t\in[T]$ and $i\in[N]$:
\begin{align}
    &x_{i,t+1}^*=\nabla_p H_t(x_{i,t}^*,p_{i,t+1}^*,\theta_t^*), &x_{i,0}^*=x_{i,0} \label{eq:start_point}\\
    &p_{i,t}^*=\nabla_x H_t(x_{i,t}^*,p_{i,t+1}^*,\theta_t^*), &p_{i,T}^*=-\frac{1}{N}\nabla \ell_i(x_{i,T}^*)
\end{align}
Here $H$ is the per-layer defined Hamiltonian function $H_t:\mathbb{R}^{d_t}\times \mathbb{R}^{d_{t+1}}\times \Theta_t\rightarrow \mathbb{R}$ as
$$
H_t(x,p,\theta_t)=p\cdot f_t(x,\theta_t)-\frac{1}{N}R_t(x,\theta_t)
$$
The parameters of the layers $\theta_t^*\in\Theta_t, t=0,1,\cdots,T-1$ will maximize the Hamiltonian functions
\begin{align}
\label{ham:1}
    \sum_{i=1}^N H_t(x_{i,t}^*,p_{i,t+1}^*,\theta_t^*)\ge \sum_{i=1}^N H_t(x_{i,t}^*,p_{i,t+1}^*,\theta_t), \forall \theta_t\in\Theta_t
\end{align}
\end{lemma}

\begin{proof}

Without loss of generality, we let $L=0$. The reason is that we can simply add an extra dynamic $w_t$ to calculate the regularization term $R$, \emph{i.e.}
$$
w_{t+1}=w_t+R_t(x_t,\theta_t), w_0=0.
$$
We append $w$ to $x$ to study the dynamic of a new $d_t+1$ dimension vector and modify $f_t(x,\theta)$ to $(f_t(x,\theta),w+R_t(x,\theta))$. Thus we only need to prove the case when $L=0$. 



For simplicity, we omit the subscript $s$ in the following proof. (Concatenating all $x_s$ into \(x=\left(x_{1}, \dots, x_{N}\right)\) can justify this.)

Now we begin the proof. Following the linearization lemma in
\cite{pmlr-v80-li18b}, consider the linearized problem
\begin{align}
\label{equ:linearize}
    \phi_{t+1}=f_t(x_t^*,\theta_t)+\nabla_x f_t(x_t^*,\theta_t)(\phi_t-x_t^*),\phi_0=x_0+\eta.
\end{align}

The reachable states by the linearized dynamic system is denoted as
$$
W_t:=\{x\in\mathbb{R}^{d_t}:\exists \theta, \eta=\eta^*  \  \text{s.t.} \  \phi_t^\theta=x\}
$$
here $x_t^\theta$ denotes the  the evolution of the dynamical system for $x_t$ under $\theta$. We also define
$$
S:=\{x\in\mathbb{R}^{d_T}:(x-x_T^*)\nabla \ell(x_T^*)<0\}
$$

The linearization lemma in
\cite{pmlr-v80-li18b} tells us that 
$\bm{W_T}$ and $\bm{S}$ are separated by $\{x: p_T^*\cdot(x-x_T^*)=0, p_T^*=-\nabla\ell(x_T^*)\}$, \emph{i.e.} 
\begin{equation}
\label{eq:W_condition}
    p_T^*\cdot (x-x_T^*)\le 0, \forall x\in W_t.
\end{equation}

Thus setting
$$
p_t^*=\nabla_x H_t(x_t^*,p_{t+1}^*,\theta_t^*)=\nabla_x f(x_t^*,\theta_t^*)^T\cdot p_{t+1}^*,
$$

we have
\begin{equation}
\label{eq:time_consist}
(\phi_{t+1}-x_{t+1}^*)\cdot p_t^*=(\phi_t-x_t^*)\cdot p_t^* .
\end{equation}

Thus from Eq.\ref{eq:W_condition} and Eq.\ref{eq:time_consist} we get
$$
 p_{t+1}^*\cdot (\phi^\theta_{t+1}-x_{t+1}^*)\le 0, \quad  t = 0,\cdots,T-1, \forall\theta \in \Theta := \Theta_0 \times \Theta_1 \times \cdots
$$
Setting $\theta_s = \theta_s^*$ for $s < t$ we have $\phi^\theta_{t+1}=f_t(x_t^*, \theta_t)$, which leads to $ p_{t+1}^*\cdot (f_t(x_t^*, \theta_t)-x_{t+1}^*)\le 0$.
This finishes the proof of the maximal principle on weight space $\Theta$.




\end{proof}

We return to the proof of the theorem. The proof of the maximal principle on the weight space, \emph{i.e.}
$$
\sum_{i=1}^N H_t(x_{i,t}^*,p_{i,t+1}^*,\theta_t^*)\ge \sum_{i=1}^N H_t(x_{i,t}^*,p_{i,t+1}^*,\theta), \forall \theta_t\in\Theta_t, t = 1,2,\cdots,T-1
$$
and
$$
\sum_{i=1}^N H_0(x_{i,0}^* +\eta^*_i,p_{i,1}^*,\theta_0^*)\ge \sum_{i=1}^N H_0(x_{i,0}^*+\eta^*_i,p_{i,1}^*,\theta_0), \forall \theta_0\in\Theta_0,
$$
can be reached with the help of Lemma \ref{pmp_lemma}:
replacing the dynamic start point $x_{i,0}$ in Eq.\ref{eq:start_point} with $x_{i,0}+\eta_i^*$ makes this maximal principle a direct corollary of Lemma \ref{pmp_lemma}. 

Next, we prove the Hamiltonian conidition for the adversary, \emph{i.e.}
\begin{align}
\label{ham:2}
\sum_{i=1}^N H_0(x_{i,0}^*+ \eta^*_i,p_{i,1}^*,\theta_0^*)\le
    \sum_{i=1}^N H_0(x_{i,0}^*+\eta_i,p_{i,1}^*,\theta_0^*), \forall \|\eta_i\|_\infty\le\epsilon
\end{align}

Assuming $R_{i,t} = 0$ like above, we define a new optimal control problem with target function $\tilde\ell_i()=-\ell_i()$ and previous dynamics except $x_{i,1}=\tilde f_0(x_{i,0};\theta_0,\eta_i)=f_0(x_{i,0}+\eta_i;\theta_0)$:
\begin{align}
     \min_{\|\eta\|_\infty \le \epsilon}&  \tilde J(\theta,\eta):=\frac{1}{N}\sum_{i=1}^N\tilde\ell_i(x_{i,T})\\
    \text{subject to }&x_{i,1}=\tilde f_0(x_{i,0};\theta_0,\eta_i), i=1,2,\cdots,N\\
    &x_{i,t+1}=f_t(x_{i,t},\theta_t), t = 1,2,\cdots,T-1.
\end{align}
However in this time, all the layer parameters $\theta_t$ are \textbf{fixed} and $\eta_i$ is the control. From the above Lemma \ref{pmp_lemma} we get
\begin{align}
    &\tilde x_{i,1}^*=\nabla_p \tilde  H_0(\tilde x_{i,0}^*,\tilde p_{i,1}^*,\theta_0, \eta_i^*),
    &\tilde x_{i,t+1}^*=\nabla_p H_t(\tilde x_{i,t}^*,\tilde p_{i,t+1}^*,\theta_t),\quad
    &\tilde x_{i,0}^*=x_{i,0},\\
    &\tilde p_{i,0}^*=\nabla_x \tilde H_0(\tilde x_{i,0}^*,\tilde p_{i,1}^*,\theta_0, \eta_i^*),
    &\tilde p_{i,t}^*=\nabla_x H_t(\tilde x_{i,t}^*,\tilde p_{i,t+1}^*,\theta_t),\quad 
    &\tilde p_{i,T}^*= \frac{1}{N}\nabla \ell_i(\tilde x_{i,T}^*),
\end{align}
where $\tilde H_0(x,p,\theta_0,\eta)=p\cdot \tilde f_0(x;\theta_0,\eta)=p\cdot f_0(x+\eta;\theta_0)$ and $t=1,\cdots, T-1$. This gives the fact that $\tilde x_{i,t}^* = x_{i,t}^*$. Lemma \ref{pmp_lemma} also tells us
\begin{align}
\sum_{i=1}^N \tilde H_0(\tilde x_{i,0}^*,\tilde p_{i,t+1}^*,\theta_0, \eta^*_i)\ge
    \sum_{i=1}^N \tilde H_0(\tilde x_{i,0}^*,\tilde p_{i,1}^*,\theta_0, \eta_i),\forall \|\eta_i\|_\infty\le\epsilon
\end{align}
which is 
\begin{align}
\sum_{i=1}^N \tilde p^*_{i,1}\cdot f_0(\tilde x_{i,0}^*+\eta^*_i;\theta_0) \ge
    \sum_{i=1}^N \tilde p^*_{i,1}\cdot f_0(\tilde x_{i,0}^*+\eta_i;\theta_0),\forall \|\eta_i\|_\infty\le\epsilon
\end{align}
On the other hand, Lemma \ref{pmp_lemma} gives 
$$
\tilde p_t^*=-\nabla_{x_t}(\tilde\ell(x_T))=\nabla_{x_t}(\ell(x_T)) = -p_t^*.
$$
Then we have
\begin{align}
\sum_{i=1}^N  p^*_{i,1}\cdot f_0( x_{i,0}^*+\eta^*_i;\theta_0) \le
    \sum_{i=1}^N  p^*_{i,1}\cdot f_0( x_{i,0}^*+\eta_i;\theta_0),\forall \|\eta_i\|_\infty\le\epsilon
\end{align}
which is
\begin{align}
\sum_{i=1}^N  H_0( x_{i,0}^*, p_{i,t+1}^*,\theta_0, \eta^*_i)\le
    \sum_{i=1}^N  H_0( x_{i,0}^*, p_{i,1}^*,\theta_0, \eta_i),\forall \|\eta_i\|_\infty\le\epsilon
\end{align}
This finishes the proof for the adversarial control.



\end{proof}

\begin{remark}
The additional assumption that the sets $\{f_t(x,\theta):\theta\in\Theta_t\}$ and $\{R_t(x,\theta):\theta\in\Theta_t\}$ are convex for every $t$ and $x\in\mathbb{R}^{d_t}$ is extremely weak and is not unrealistic which is already explained in \cite{pmlr-v80-li18b}.
\end{remark}
\section{Experiment Setup}

\subsection{MNIST} Training against PGD-40 is a common practice to get sota results on MNIST. We adopt network architectures from \cite{zhang2019theoretically}  with four convolutional layers followed by three fully connected layers. Following \cite{zhang2019theoretically} and \cite{madry2018towards}, we set the size of perturbation as $\epsilon=0.3$ in an infinite norm sense. Experiments are taken on idle NVIDIA Tesla P100 GPUs. We train models for 55 epochs with a batch size of 256, longer than what convergence needs for both training methods.  The learning rate is set to 0.1 initially, and is lowered by 10 times at epoch 45. We use a weight decay of $5e-4$ and a momentum of $0.9$. To measure the robustness of trained models, we performed a PGD-40 and CW\cite{carlini2017towards} attack with CW coefficient $c=5e2$ and $lr=1e-2$. 


\subsection{CIFAR-10} Following \cite{madry2018towards}, we take Preact-ResNet18 and Wide ResNet-34 as the models for testing. We set the the size of perturbation as $\epsilon =  8/255$ in an infinite norm sense. We perform a 20 steps of PGD with step size $2/255$ when testing. 
For PGD adversarial training, we train models for 105 epochs as a common practice. The learning rate is set to $5e-2$  initially, and is lowered by 10 times at epoch 79, 90 and 100.
For YOPO-$m$-$n$, we train models for 40 epochs which is much longer than what convergence needs. The learning rate is set to $0.2 / m$  initially, and is lowered by 10 times at epoch 30 and 36.
We use a batch size of 256, a weight decay of $5e-4$ and a momentum of $0.9$ for both algorithm. We also test our model's robustness under CW attack \cite{carlini2017towards} with $c=5e2$ and $lr=1e-2$. The experiments are taken on idle NVIDIA GeForce GTX 1080 Ti GPUs.




\subsection{TRADES} 
TRADES\cite{zhang2019theoretically} achieves the state-of-the-art results in adversarial defensing.  The methodology achieves the 1st place out of the 1,995 submissions in the robust model track of NeurIPS 2018 Adversarial Vision
Challenge. TRADES proposed a surrogate loss which  quantify the trade-off in terms of the gap between the risk for adversarial examples and the risk for non-adversarial examples and the objective  function can be formulated as
\begin{align}
    \min_\theta \mathbb{E}_{(x,y)\sim \mathcal{D}} \max_{\|\eta\| \leq \epsilon} \left(\ell(f_\theta(x),y) + \mathcal{L}\left(f_{\theta}\left(x\right), f_{\theta}\left(x + \eta\right)\right) / \lambda\right)
\end{align}
where $f_{\theta}(x)$ is the neural network parameterized by $\theta$, $\ell$ denotes the loss function, $\mathcal{L}(\cdot,\cdot)$ denotes the consistency loss  and $\lambda$ is a balancing hyper parameter which we set to be $1$ as in \cite{zhang2019theoretically}. To solve the min-max problem, \cite{zhang2019theoretically} also searched the ascent direction via the gradient of the "adversarial loss", \emph{i.e.} generating the adversarial example before performing gradient descent on the weight. Specifically, the PGD attack is performed to maximize a consistency loss instead of classification loss. For each clean data $x$, a single iteration of the adversarial attach can be formulated as
$$
x^{\prime} \leftarrow \Pi_{\| x^\prime - x\| \le \epsilon} \left(\alpha_{1} \operatorname{sign}\left(\nabla_{x^{\prime}} \mathcal{L}\left(f_{\theta}\left(x\right), f_{\theta}\left(x^{\prime}\right)\right)\right)+x^{\prime}\right),
$$
where $\Pi$ is projection operator.  In the implementation of \cite{zhang2019theoretically}, after 10 such update iterations for each input data $x_{i}$, the update for weights is performed as
$$
\theta \leftarrow \theta-\alpha_{2} \sum_{i=1}^{B} \nabla_{\theta}\left[\ell\left(f_{\theta}\left(x_{i}\right), y_{i}\right)+\mathcal{L}\left(f_{\theta}\left(x_{i}\right), f_{\theta}\left(x_{i}^{\prime}\right)\right) / \lambda \right] / B,
$$
where $B$ is the batch size. We name this algorithm as TRADES-10, for it uses 10 iterations to update the adversary.

Following the notation used in previous section, we denote $f_0$ as the first layer of the neural network and $g_{\tilde\theta}$ denotes the network without the first layer. The whole network can be formulated as the compostion of the two parts, \emph{i.e.} $f_\theta = g_{\tilde\theta} \circ f_0$. To apply our gradient based YOPO method to TRADES, following Section 2, we decouple the adversarial calculation and network updating as shown in Algorithm \ref{alg:TRADES-YOPO}. Projection operation is omitted. Notice that in Section.2 we take advantage every intermediate perturbation $\eta^j, j=1,\cdots,m-1$ to update network weights while here we only use the final perturbation $\eta=\eta^m$ to compute the final loss term. In practice, this accumulation of gradient doesn't helps. For TRADES-YOPO, acceleration of YOPO is brought by decoupling the adversarial calculation with the gradient back propagation.


    


\begin{algorithm}
\caption{TRADES-YOPO-$m$-$n$}
\label{alg:TRADES-YOPO}
\begin{algorithmic}
\STATE {Randomly initialize the network parameters or using a pre-trained network.} 
\REPEAT 
    \STATE Randomly select a  mini-batch $\mathcal{B}=\{(x_1,y_1),\cdots,(x_B,y_B)\}$ from training set.
    \STATE Initialize $\eta_i^{1,0},i=1,2,\cdots, B$  by sampling from a uniform distribution between [-$\epsilon$, $\epsilon$]
    \FOR{$j=1$ to $m$}
        \STATE $p_i = \nabla_{g_{\tilde{\theta}}}\left(\mathcal{L}\left(g_{\tilde{\theta}}\left(f_{0}\left(x_i+\eta_i^{j,0}, \theta_{0}\right)\right),g_{\tilde{\theta}}\left(f_{0}\left(x_i, \theta_{0}\right)\right) \right)\right)\cdot \nabla_{f_0} \left(g_{\tilde\theta}(f_0(x_i+\eta_i^{j,0},\theta_0)) \right),$
        \STATE $i=1,2,\cdots,B$
        \FOR{$s=0$ to $n-1$}
            \STATE $\eta_i^{j,s+1} \leftarrow \eta_i^{j,s} + \alpha_1 \cdot p_i \cdot \nabla_{\eta} f_0(x_i+\eta_i^{j,s}, \theta_0) ,i=1,2,\cdots,B$
        \ENDFOR
        \STATE $\eta_i^{j+1,0}=\eta_i^{j,n},i=1,2,\cdots,B$
    \ENDFOR
    
    \STATE $
\theta \leftarrow \theta-\alpha_{2} \sum_{i=1}^{B} \nabla_{\theta}\left[\ell\left(f_{\theta}\left(x_{i}\right), y_{i}\right)+\mathcal{L}\left(f_{\theta}\left(x_{i}\right), f_{\theta}\left(x_{i}+\eta_i^{m,n}\right)\right) / \lambda \right] / B.
$
    
\UNTIL Convergence
\end{algorithmic}
\end{algorithm}


We name this algorithm as TRADES-YOPO-$m$-$n$. With less than half time of TRADES-$10$, TRADES-YOPO-$3$-$4$ achieves even better result than its baseline.
Quantitative results is demonstrated in Table \ref{tab:TRADES_pre18}. The mini-batch size is $256$. All the experiments run for $105$ epochs and the learning rate set to $2e-1$ initially, and is lowered by $10$ times at epoch $70$, $90$ and $100$. The weight decay coefficient is $5e-4$ and momentum coefficient is $0.9$. We also test our model’s robustness under CW attack \cite{carlini2017towards}  with $c=5e2$ and $lr=5e-4$. Experiments are taken on idle NVIDIA Tesla P100 GPUs.




\end{document}


\maketitle

\appendix

\section{Proof Of The Theorems}

\subsection{Proof of Theorem 1}
In this section we give the full statement of the maximum principle for the adversarial training and present a proof. Let's start from the case of the natural training of neural networks.

\begin{theorem*}
(PMP for adversarial training) Assume $\ell_i$ is twice continuous differentiable, $f_t(\cdot,\theta),R_t(\cdot,\theta)$ are twice continuously differentiable with respect to $x$, and $f_t(\cdot,\theta),R_t(\cdot,\theta)$ together with their $x$ partial derivatives are uniformly bounded in $t$ and $\theta$. The sets $\{f_t(x,\theta):\theta\in\Theta_t\}$ and $\{R_t(x,\theta):\theta\in\Theta_t\}$ are convex for every $t$ and $x\in\mathbb{R}^{d_t}$. Let $\theta^*$ to be the solution of  
\begin{align}
    \min_{\theta\in\Theta} \max_{\|\eta\|_\infty \le \epsilon}&  J(\theta,\eta):=\frac{1}{N}\sum_{i=1}^N\ell_i(x_{i,T})+\frac{1}{N}\sum_{i=1}^N\sum_{t=0}^{T-1}R_t(x_{i,t},\theta_t) \\
    \text{subject to }&x_{i,1}=f_0(x_{i,0}+\eta_i;\theta_0), i=1,2,\cdots,N\\
    &x_{i,t+1}=f_t(x_{i,t},\theta_t), t = 1,2,\cdots,T-1.
\end{align}

Then there exists co-state processes $p_i^*:={p^*_{i,t}:t=0,\cdots,T}$ such that the following holds for all $t\in[T]$ and $i\in[N]$:
\begin{align}
    &x_{i,t+1}^*=\nabla_p H_t(x_{i,t}^*,p_{i,t+1}^*,\theta_t^*), &x_{i,0}^*=x_{i,0}+\eta_i^*\\
    &p_{i,t}^*=\nabla_x H_t(x_{i,t}^*,p_{i,t+1}^*,\theta_t^*), &p_{i,T}^*=-\frac{1}{N}\nabla \ell_i(x_{i,T}^*)
\end{align}
Here $H$ is the per-layer defined Hamiltonian function $H_t:\mathbb{R}^{d_t}\times \mathbb{R}^{d_{t+1}}\times \Theta_t\rightarrow \mathbb{R}$ as
$$
H_t(x,p,\theta_t)=p\cdot f_t(x,\theta_t)-\frac{1}{N}R_t(x,\theta_t)
$$

At the same time, the parameter of the first layer $\theta_0^*\in\Theta_0$ and the best perturbation $\eta^*$ satisfy
\begin{align}
\sum_{i=1}^N H_0(x_{i,0}^*+\eta_i,p_{i,1}^*,\theta_0^*)\ge
    \sum_{i=1}^N H_0(x_{i,0}^*+\eta^*_i,p_{i,1}^*,\theta_0^*)\ge \sum_{i=1}^N H_0(x_{i,0}^*+\eta^*_i,p_{i,1}^*,\theta_0), \forall \theta_0\in\Theta_0,\|\eta_i\|_\infty\le\epsilon
\end{align}
while parameter of the other layers $\theta_t^*\in\Theta_t, t=1,2,\cdots,T-1$ will maximize the Hamiltonian functions
\begin{align}
\label{ham:1}
    \sum_{i=1}^N H_t(x_{i,t}^*,p_{i,t+1}^*,\theta_t^*)\ge \sum_{i=1}^N H_t(x_{i,t}^*,p_{i,t+1}^*,\theta_t), \forall \theta_t\in\Theta_t
\end{align}
\end{theorem*}

\begin{proof}

We first propose PMP for discrete time dynamic system and utilize it directly gives out the proof of PMP for adversarial training.

\begin{lemma}\label{pmp_lemma}
(PMP for discrete time dynamic system) Assume $\ell$ is twice continuous differentiable, $f_t(\cdot,\theta),R_t(\cdot,\theta)$ are twice continuously differentiable with respect to $x$, and $f_t(\cdot,\theta),R_t(\cdot,\theta)$ together with their $x$ partial derivatives are uniformly bounded in $t$ and $\theta$. The sets $\{f_t(x,\theta):\theta\in\Theta_t\}$ and $\{R_t(x,\theta):\theta\in\Theta_t\}$ are convex for every $t$ and $x\in\mathbb{R}^{d_t}$. Let $\theta^*$ to be the solution of  
\begin{align}
    \min_{\theta\in\Theta} \max_{\|\eta\|_\infty \le \epsilon}&  J(\theta,\eta):=\frac{1}{N}\sum_{i=1}^N\ell_i(x_{i,T})+\frac{1}{N}\sum_{i=1}^N\sum_{t=0}^{T-1}R_t(x_{i,t},\theta_t)\\
    \text{subject to }&x_{i,t+1}=f_t(x_{i,t},\theta_t), i\in[N]
    , t = 0,1,\cdots,T-1.
\end{align}

There exists co-state processes $p_i^*:={p^*_{i,t}:t=0,\cdots,T}$ such that the following holds for all $t\in[T]$ and $i\in[N]$:
\begin{align}
    &x_{i,t+1}^*=\nabla_p H_t(x_{i,t}^*,p_{i,t+1}^*,\theta_t^*), &x_{i,0}^*=x_{i,0} \label{eq:start_point}\\
    &p_{i,t}^*=\nabla_x H_t(x_{i,t}^*,p_{i,t+1}^*,\theta_t^*), &p_{i,T}^*=-\frac{1}{N}\nabla \ell_i(x_{i,T}^*)
\end{align}
Here $H$ is the per-layer defined Hamiltonian function $H_t:\mathbb{R}^{d_t}\times \mathbb{R}^{d_{t+1}}\times \Theta_t\rightarrow \mathbb{R}$ as
$$
H_t(x,p,\theta_t)=p\cdot f_t(x,\theta_t)-\frac{1}{N}R_t(x,\theta_t)
$$
The parameters of the layers $\theta_t^*\in\Theta_t, t=0,1,\cdots,T-1$ will maximize the Hamiltonian functions
\begin{align}
\label{ham:1}
    \sum_{i=1}^N H_t(x_{i,t}^*,p_{i,t+1}^*,\theta_t^*)\ge \sum_{i=1}^N H_t(x_{i,t}^*,p_{i,t+1}^*,\theta_t), \forall \theta_t\in\Theta_t
\end{align}
\end{lemma}

\begin{proof}

Without loss of generality, we let $L=0$. The reason is that we can simply add an extra dynamic $w_t$ to calculate the regularization term $R$, \emph{i.e.}
$$
w_{t+1}=w_t+R_t(x_t,\theta_t), w_0=0.
$$
We append $w$ to $x$ to study the dynamic of a new $d_t+1$ dimension vector and modify $f_t(x,\theta)$ to $(f_t(x,\theta),w+R_t(x,\theta))$. Thus we only need to prove the case when $L=0$. 



For simplicity, we omit the subscript $s$ in the following proof. (Concatenating all $x_s$ into \(x=\left(x_{1}, \dots, x_{N}\right)\) can justify this.)

Now we begin the proof. Following the linearization lemma in \cite{halkin1966maximum} \cite{pmlr-v80-li18b}, consider the linearized problem
\begin{align}
\label{equ:linearize}
    \phi_{t+1}=f_t(x_t^*,\theta_t)+\nabla_x f_t(x_t^*,\theta_t)(\phi_t-x_t^*),\phi_0=x_0+\eta.
\end{align}

The reachable states by the linearized dynamic system is denoted as
$$
W_t:=\{x\in\mathbb{R}^{d_t}:\exists \theta, \eta=\eta^*  \  \text{s.t.} \  \phi_t^\theta=x\}
$$
here $x_t^\theta$ denotes the  the evolution of the dynamical system for $x_t$ under $\theta$. We also define
$$
S:=\{x\in\mathbb{R}^{d_T}:(x-x_T^*)\nabla \ell(x_T^*)<0\}
$$

The linearization lemma in \cite{halkin1966maximum,pmlr-v80-li18b} tells us that 
$\bm{W_T}$ and $\bm{S}$ are separated by $\{x: p_T^*\cdot(x-x_T^*)=0, p_T^*=-\nabla\ell(x_T^*)\}$, \emph{i.e.} 
\begin{equation}
\label{eq:W_condition}
    p_T^*\cdot (x-x_T^*)\le 0, \forall x\in W_t.
\end{equation}

Thus setting
$$
p_t^*=\nabla_x H_t(x_t^*,p_{t+1}^*,\theta_t^*)=\nabla_x f(x_t^*,\theta_t^*)^T\cdot p_{t+1}^*,
$$

we have
\begin{equation}
\label{eq:time_consist}
(\phi_{t+1}-x_{t+1}^*)\cdot p_t^*=(\phi_t-x_t^*)\cdot p_t^* .
\end{equation}

Thus from Eq.\ref{eq:W_condition} and Eq.\ref{eq:time_consist} we get
$$
 p_{t+1}^*\cdot (\phi^\theta_{t+1}-x_{t+1}^*)\le 0, \quad  t = 0,\cdots,T-1, \forall\theta \in \Theta := \Theta_0 \times \Theta_1 \times \cdots
$$
Setting $\theta_s = \theta_s^*$ for $s < t$ we have $\phi^\theta_{t+1}=f_t(x_t^*, \theta_t)$, which leads to $ p_{t+1}^*\cdot (f_t(x_t^*, \theta_t)-x_{t+1}^*)\le 0$.
This finishes the proof of the maximal principle on weight space $\Theta$.




\end{proof}

We return to the proof of the theorem. The proof of the maximal principle on the weight space, \emph{i.e.}
$$
\sum_{i=1}^N H_t(x_{i,t}^*,p_{i,t+1}^*,\theta_t^*)\ge \sum_{i=1}^N H_t(x_{i,t}^*,p_{i,t+1}^*,\theta), \forall \theta_t\in\Theta_t, t = 1,2,\cdots,T-1
$$
and
$$
\sum_{i=1}^N H_0(x_{i,0}^* +\eta^*_i,p_{i,1}^*,\theta_0^*)\ge \sum_{i=1}^N H_0(x_{i,0}^*+\eta^*_i,p_{i,1}^*,\theta_0), \forall \theta_0\in\Theta_0,
$$
can be reached with the help of Lemma \ref{pmp_lemma}:
replacing the dynamic start point $x_{i,0}$ in Eq.\ref{eq:start_point} with $x_{i,0}+\eta_i^*$ makes this maximal principle a direct corollary of Lemma \ref{pmp_lemma}. 

Next, we prove the Hamiltonian conidition for the adversary, \emph{i.e.}
\begin{align}
\label{ham:2}
\sum_{i=1}^N H_0(x_{i,0}^*+ \eta^*_i,p_{i,1}^*,\theta_0^*)\le
    \sum_{i=1}^N H_0(x_{i,0}^*+\eta_i,p_{i,1}^*,\theta_0^*), \forall \|\eta_i\|_\infty\le\epsilon
\end{align}

Assuming $R_{i,t} = 0$ like above, we define a new optimal control problem with target function $\tilde\ell_i()=-\ell_i()$ and previous dynamics except $x_{i,1}=\tilde f_0(x_{i,0};\theta_0,\eta_i)=f_0(x_{i,0}+\eta_i;\theta_0)$:
\begin{align}
     \min_{\|\eta\|_\infty \le \epsilon}&  \tilde J(\theta,\eta):=\frac{1}{N}\sum_{i=1}^N\tilde\ell_i(x_{i,T})\\
    \text{subject to }&x_{i,1}=\tilde f_0(x_{i,0};\theta_0,\eta_i), i=1,2,\cdots,N\\
    &x_{i,t+1}=f_t(x_{i,t},\theta_t), t = 1,2,\cdots,T-1.
\end{align}
However in this time, all the layer parameters $\theta_t$ are \textbf{fixed} and $\eta_i$ is the control. From the above Lemma \ref{pmp_lemma} we get
\begin{align}
    &\tilde x_{i,1}^*=\nabla_p \tilde  H_0(\tildex_{i,0}^*,\tilde p_{i,1}^*,\theta_0, \eta_i^*),
    &\tilde x_{i,t+1}^*=\nabla_p H_t(\tilde x_{i,t}^*,\tilde p_{i,t+1}^*,\theta_t),\quad
    &\tilde x_{i,0}^*=x_{i,0},\\
    &\tilde p_{i,0}^*=\nabla_x \tilde H_0(\tilde x_{i,0}^*,\tilde p_{i,1}^*,\theta_0, \eta_i^*),
    &\tilde p_{i,t}^*=\nabla_x H_t(\tilde x_{i,t}^*,\tilde p_{i,t+1}^*,\theta_t),\quad 
    &\tilde p_{i,T}^*= \frac{1}{N}\nabla \ell_i(\tilde x_{i,T}^*),
\end{align}
where $\tilde H_0(x,p,\theta_0,\eta)=p\cdot \tilde f_0(x;\theta_0,\eta)=p\cdot f_0(x+\eta;\theta_0)$ and $t=1,\cdots, T-1$. This gives the fact that $\tilde x_{i,t}^* = x_{i,t}^*$. Lemma \ref{pmp_lemma} also tells us
\begin{align}
\sum_{i=1}^N \tilde H_0(\tilde x_{i,0}^*,\tilde p_{i,t+1}^*,\theta_0, \eta^*_i)\ge
    \sum_{i=1}^N \tilde H_0(\tilde x_{i,0}^*,\tilde p_{i,1}^*,\theta_0, \eta_i),\forall \|\eta_i\|_\infty\le\epsilon
\end{align}
which is 
\begin{align}
\sum_{i=1}^N \tilde p^*_{i,1}\cdot f_0(\tilde x_{i,0}^*+\eta^*_i;\theta_0) \ge
    \sum_{i=1}^N \tilde p^*_{i,1}\cdot f_0(\tilde x_{i,0}^*+\eta_i;\theta_0),\forall \|\eta_i\|_\infty\le\epsilon
\end{align}
On the other hand, Lemma \ref{pmp_lemma} gives 
$$
\tilde p_t^*=-\nabla_{x_t}(\tilde\ell(x_T))=\nabla_{x_t}(\ell(x_T)) = -p_t^*.
$$
Then we have
\begin{align}
\sum_{i=1}^N  p^*_{i,1}\cdot f_0( x_{i,0}^*+\eta^*_i;\theta_0) \le
    \sum_{i=1}^N  p^*_{i,1}\cdot f_0( x_{i,0}^*+\eta_i;\theta_0),\forall \|\eta_i\|_\infty\le\epsilon
\end{align}
which is
\begin{align}
\sum_{i=1}^N  H_0( x_{i,0}^*, p_{i,t+1}^*,\theta_0, \eta^*_i)\le
    \sum_{i=1}^N  H_0( x_{i,0}^*, p_{i,1}^*,\theta_0, \eta_i),\forall \|\eta_i\|_\infty\le\epsilon
\end{align}
This finishes the proof for the adversarial control.



\end{proof}

\begin{remark}
The additional assumption that the sets $\{f_t(x,\theta):\theta\in\Theta_t\}$ and $\{R_t(x,\theta):\theta\in\Theta_t\}$ are convex for every $t$ and $x\in\mathbb{R}^{d_t}$ is extremely weak and is not unrealistic which is already explained in \cite{pmlr-v80-li18b}.
\end{remark}
\section{Experiment Setup and Supplementary Experiments}

\subsection{MNIST} Training against PGD-40 is a common practice to get sota results on MNIST. We adopt network architectures from \cite{zhang2019theoretically}  with four convolutional layers followed by three fully connected layers. Following \cite{zhang2019theoretically} and \cite{madry2018towards}, we set the size of perturbation as $\epsilon=0.3$ in an infinite norm sense. Experiments are taken on idle NVIDIA Tesla P100 GPUs. We train models for 55 epochs with a batch size of 256, longer than what convergence needs for both training methods.  The learning rate is set to 0.1 initially, and is lowered by 10 times at epoch 45. We use a weight decay of $5e-4$ and a momentum of $0.9$. To measure the robustness of trained models, we performed a PGD-40 and CW\cite{carlini2017towards} attack with CW coefficient $c=5e2$ and $lr=1e-2$. 

\begin{table}[H]
  \centering
\begin{tabular}{|c|ccc|}
\hline\hline
{Training Methods} & {Clean Data} & {PGD-40 Attack} & {CW Attack}  \\ 
\hline\hline
{PGD-5 \cite{madry2018towards}} & {99.43\%} & {42.39\%} & {77.04\%}  \\
{PGD-10 \cite{madry2018towards}} & {99.53\%} & {77.00\%} & {82.00\%}  \\
{PGD-40 \cite{madry2018towards}} & {99.49\%} & {96.56\%} & {93.52\%}  \\
\hline
{YOPO-5-10 (Ours)} & {99.46\%} & {96.27\%} & {93.56\%}\\
\hline
\end{tabular}
\caption{Results of MNIST robust training. YOPO-5-10 achieves state-of-the-art result as PGD-40. Notice that for every epoch, PGD-5 and YOPO-5-3 have approximately the same computational cost.}
\label{tab:mnist}
\end{table}

\subsection{CIFAR-10} Following \cite{madry2018towards}, we take Preact-ResNet18 and Wide ResNet-34 as the models for testing. We set the the size of perturbation as $\epsilon =  8/255$ in an infinite norm sense. We perform a 20 steps of PGD with step size $2/255$ when testing. 
For PGD adversarial training, we train models for 105 epochs as a common practice. The learning rate is set to $5e-2$  initially, and is lowered by 10 times at epoch 79, 90 and 100.
For YOPO-$m$-$n$, we train models for 40 epochs which is much longer than what convergence needs. The learning rate is set to $0.2 / m$  initially, and is lowered by 10 times at epoch 30 and 36.
We use a batch size of 256, a weight decay of $5e-4$ and a momentum of $0.9$ for both algorithm. We also test our model's robustness under CW attack \cite{carlini2017towards} with $c=5e2$ and $lr=1e-2$. The experiments are taken on idle NVIDIA GeForce GTX 1080 Ti GPUs.



\begin{table}[H]
  \centering 
\begin{tabular}{|c|ccc|}
\hline\hline
{Training Methods} & {Clean Data} & {PGD-20 Attack} & {CW Attack} \\ \hline\hline
{PGD-3 \cite{madry2018towards}} & {88.19\%} & {32.51\%} & {54.65\%}  \\
{PGD-5 \cite{madry2018towards}} & {86.63\%} & {37.78\%} & {57.71\%} \\
{PGD-10 \cite{madry2018towards}} & {84.82\%} & {41.61\%} & {58.88\%} \\
\hline
{YOPO-3-5 (Ours)} & {82.14\%} & {38.18\%} & {55.73\%} \\
{YOPO-5-3 (Ours)} & {83.99\%} & {44.72\%} & {59.77\%} \\
\hline
\end{tabular}
\caption{Results of PreAct-Res18 for CIFAR10. Note that for every epoch, PGD-3 and YOPO-3-5 have the approximately same computational cost, and so do PGD-5 and YOPO-5-3.}
\label{tab:cifar101}
\end{table}

\subsection{TRADES} 
TRADES\cite{zhang2019theoretically} achieves the state-of-the-art results in adversarial defensing.  The methodology achieves the 1st place out of the 1,995 submissions in the robust model track of NeurIPS 2018 Adversarial Vision
Challenge. TRADES proposed a surrogate loss which  quantify the trade-off in terms of the gap between the risk for adversarial examples and the risk for non-adversarial examples and the objective  function can be formulated as
\begin{align}
    \min_\theta \mathbb{E}_{(x,y)\sim \mathcal{D}} \max_{\|\eta\| \leq \epsilon} \left(\ell(f_\theta(x),y) + \mathcal{L}\left(f_{\theta}\left(x\right), f_{\theta}\left(x + \eta\right)\right) / \lambda\right)
\end{align}
where $f_{\theta}(x)$ is the neural network parameterized by $\theta$, $\ell$ denotes the loss function, $\mathcal{L}(\cdot,\cdot)$ denotes the consistency loss  and $\lambda$ is a balancing hyper parameter which we set to be $1$ as in \cite{zhang2019theoretically}. To solve the min-max problem, \cite{zhang2019theoretically} also searched the ascent direction via the gradient of the "adversarial loss", \emph{i.e.} generating the adversarial example before performing gradient descent on the weight. Specifically, the PGD attack is performed to maximize a consistency loss instead of classification loss. For each clean data $x$, a single iteration of the adversarial attach can be formulated as
$$
x^{\prime} \leftarrow \Pi_{\| x^\prime - x\| \le \epsilon} \left(\alpha_{1} \operatorname{sign}\left(\nabla_{x^{\prime}} \mathcal{L}\left(f_{\theta}\left(x\right), f_{\theta}\left(x^{\prime}\right)\right)\right)+x^{\prime}\right),
$$
where $\Pi$ is projection operator.  In the implementation of \cite{zhang2019theoretically}, after 10 such update iterations for each input data $x_{i}$, the update for weights is performed as
$$
\theta \leftarrow \theta-\alpha_{2} \sum_{i=1}^{B} \nabla_{\theta}\left[\ell\left(f_{\theta}\left(x_{i}\right), y_{i}\right)+\mathcal{L}\left(f_{\theta}\left(x_{i}\right), f_{\theta}\left(x_{i}^{\prime}\right)\right) / \lambda \right] / B,
$$
where $B$ is the batch size. We name this algorithm as TRADES-10, for it uses 10 iterations to update the adversary.

Following the notation used in previous section, we denote $f_0$ as the first layer of the neural network and $g_{\tilde\theta}$ denotes the network without the first layer. The whole network can be formulated as the compostion of the two parts, \emph{i.e.} $f_\theta = g_{\tilde\theta} \circ f_0$. To apply our gradient based YOPO method to TRADES, following Section 2, we decouple the adversarial calculation and network updating as shown in Algorithm \ref{alg:TRADES-YOPO}. Projection operation is omitted. Notice that in Section.2 we take advantage every intermediate perturbation $\eta^j, j=1,\cdots,m-1$ to update network weights while here we only use the final perturbation $\eta=\eta^m$ to compute the final loss term. In practice, this accumulation of gradient doesn't helps. For TRADES-YOPO, acceleration of YOPO is brought by decoupling the adversarial calculation with the gradient back propagation.


    


\begin{algorithm}
\caption{TRADES-YOPO-$m$-$n$}
\label{alg:TRADES-YOPO}
\begin{algorithmic}
\STATE {Randomly initialize the network parameters or using a pre-trained network.} 
\REPEAT 
    \STATE Randomly select a  mini-batch $\mathcal{B}=\{(x_1,y_1),\cdots,(x_B,y_B)\}$ from training set.
    \STATE Initialize $\eta_i^{1,0},i=1,2,\cdots, B$  by sampling from a uniform distribution between [-$\epsilon$, $\epsilon$]
    \FOR{$j=1$ to $m$}
        \STATE $p_i = \nabla_{g_{\tilde{\theta}}}\left(\mathcal{L}\left(g_{\tilde{\theta}}\left(f_{0}\left(x_i+\eta_i^{j,0}, \theta_{0}\right)\right),g_{\tilde{\theta}}\left(f_{0}\left(x_i, \theta_{0}\right)\right) \right)\right)\cdot \nabla_{f_0} \left(g_{\tilde\theta}(f_0(x_i+\eta_i^{j,0},\theta_0)) \right),$
        \STATE $i=1,2,\cdots,B$
        \FOR{$s=0$ to $n-1$}
            \STATE $\eta_i^{j,s+1} \leftarrow \eta_i^{j,s} + \alpha_1 \cdot p_i \cdot \nabla_{\eta} f_0(x_i+\eta_i^{j,s}, \theta_0) ,i=1,2,\cdots,B$
        \ENDFOR
        \STATE $\eta_i^{j+1,0}=\eta_i^{j,n},i=1,2,\cdots,B$
    \ENDFOR
    
    \STATE $
\theta \leftarrow \theta-\alpha_{2} \sum_{i=1}^{B} \nabla_{\theta}\left[\ell\left(f_{\theta}\left(x_{i}\right), y_{i}\right)+\mathcal{L}\left(f_{\theta}\left(x_{i}\right), f_{\theta}\left(x_{i}+\eta_i^{m,n}\right)\right) / \lambda \right] / B.
$
    
\UNTIL Convergence
\end{algorithmic}
\end{algorithm}


We name this algorithm as TRADES-YOPO-$m$-$n$. With less than half time of TRADES-$10$, TRADES-YOPO-$3$-$4$ achieves even better result than its baseline.
Quantitative results is demonstrated in Table \ref{tab:TRADES_pre18}. The mini-batch size is $256$. All the experiments run for $105$ epochs and the learning rate set to $2e-1$ initially, and is lowered by $10$ times at epoch $70$, $90$ and $100$. The weight decay coefficient is $5e-4$ and momentum coefficient is $0.9$. We also test our model’s robustness under CW attack \cite{carlini2017towards}  with $c=5e2$ and $lr=5e-4$. Experiments are taken on idle NVIDIA Tesla P100 GPUs. 


\begin{table}[H]
    \centering
    \begin{tabular}{|c|ccc|c|}
        \hline \hline
         Training Methods & Clean Data & PGD-20 Attack& CW Attack & Training Time (mins)\\
         \hline\hline
         TRADES-10\cite{zhang2019theoretically}& 86.14\% & 44.50\% &  58.40\% & 633\\
         \hline
         TRADES-YOPO-3-4 (Ours) & 87.82\%& 46.13\%& 59.48\% & 259 \\
          TRADES-YOPO-2-5 (Ours) & 88.15\%& 42.48\%& 59.25\% & 218 \\
          \hline
    \end{tabular}
    \caption{Results of "TRADES" training with PreAct-Res18 for CIFAR10}
    \label{tab:TRADES_pre18}
\end{table}


\bibliographystyle{unsrt}
\bibliography{ref}